\documentclass[conference]{IEEEtran}

\usepackage{cite}
\usepackage{amsmath,amssymb,amsfonts}
\usepackage{algorithmic}
\usepackage{graphicx}
\usepackage{textcomp}
\usepackage{xcolor}
\usepackage[hidelinks]{hyperref}
\usepackage[numbers]{natbib}

\def\BibTeX{{\rm B\kern-.05em{\sc i\kern-.025em b}\kern-.08em
    T\kern-.1667em\lower.7ex\hbox{E}\kern-.125emX}}

\usepackage{amsfonts,amsmath,amssymb,amsthm}
\usepackage{booktabs} %
\usepackage[normalem]{ulem} %
\usepackage{xspace}
\usepackage{bbm}

\newif\ifnotes

\notesfalse

\newif\ifsubmission

\submissionfalse

\DeclareMathOperator*{\argmax}{arg\,max}

\newtheorem{theorem}{Theorem}
\newtheorem{claim}{Claim}

\newtheorem{definition}{Definition}
\newtheorem{conjecture}{Conjecture}

\newcommand{\expect}{\mathop{\mathbb{E}}}

\usepackage{xcolor}
\ifnotes
    \newcommand{\mynote}[3]{\marginpar{{\tiny \parbox{0.8in}{\color{#2} \sf {#1}: {#3}}}}}
    \newcommand{\inlinenote}[3]{{\bfseries \color{#2} {#1}: #3}}
    
\else
    \newcommand{\mynote}[3]{}
    \newcommand{\inlinenote}[3]{}
    
\fi

\newcommand{\hide}[1]{\mynote{Some text hidden here.}{red}{}}

\newcommand{\ot}[1]{{\color{lightgray}{\sout{#1}}}}

\renewcommand{\ot}[1]{}%

\theoremstyle{definition}

\newtheorem*{theorem*}{Theorem}

\theoremstyle{remark}

\newcommand{\X}{\mathcal{X}}
\newcommand{\Y}{\mathcal{Y}}

\newcommand{\reals}{\mathbb{R}}
\newcommand{\req}{\mathtt{Req}}

\newcommand{\opt}{\mathtt{opt}}

\newcommand{\alg}{\mathcal{A}}
\newcommand{\RA}{\mathcal{R}_\alg}

\newcommand{\rand}{\mathtt{Rand}}

\newcommand{\retrain}{\mathtt{Retrain}}

\newcommand{\D}{\mathcal{D}}
\newcommand{\F}{\mathcal{F}}

\newcommand{\err}{\mathtt{err}}
\newcommand{\util}{\mathtt{util}}

\newcommand{\poisson}{\text{Poisson}}

\newcommand{\shap}{\mathtt{Shap}}

\newcommand{\fn}{h}

\submissiontrue
\begin{document}

\title{Modeling Deletion Requests in Machine Unlearning}

\author{\IEEEauthorblockN{Christian Cianfarani}
\IEEEauthorblockA{\textit{Data Science Institute} \\
\textit{Columbia University}\\
New York, USA \\
cc5672@columbia.edu}
\and
\IEEEauthorblockN{Aloni Cohen}
\IEEEauthorblockA{\textit{Department of Computer Science} \\
\textit{University of Chicago}\\
Chicago, USA \\
aloni@uchicago.edu}
}
\maketitle

\begin{abstract}
    Machine unlearning is seen as a promising approach to enable users to exercise the ``right to erasure'' in the context of AI models. We ask how users might influence the behavior of models when exercising this right. We define two types of behaviors that users might adopt when requesting the deletion of their data: \emph{adaptivity} and \emph{collectivity}. Drawing connections between the goals of users in this context and results in stochastic optimization, we demonstrate theoretical gaps between the potential effects of groups of users who do and do not display these behaviors. We then show how techniques from data valuation might be used to design deletion requesters that can significantly alter model behavior in realistic settings. In experiments on computer vision tasks, we demonstrate the differential effects of different models of user behavior and attempt to isolate the impacts of adaptivity and collectivity.
\end{abstract}
\begin{IEEEkeywords}
Data Deletion, Machine Unlearning, Privacy.
\end{IEEEkeywords}

\section{Introduction}
The ``right to be erasure'' gives individuals the ability to request their personal data be deleted from third-party databases. Versions of this right have been conferred by the EU’s General Data Protection Regulation (GDPR) and the California Consumer Privacy Act (CCPA). A growing body of academic work has studied how this right might apply to personal data encoded in the weights of large machine learning models. The methods proposed in these works are broadly encompassed by the term \emph{machine unlearning}. While these methods are not yet suitable for large-scale deployments, they demonstrate that, in principle, personal data can be ``erased’’ from ML models. 

Most research on machine unlearning is about how to unlearn quickly and effectively while controlling the impact on model performance. The proposed methods are often evaluated by unlearning \emph{uniformly random} samples from the training set. 

Suppose that individuals may one day be able to request their data be deleted from the weights of machine learning models after deployment. 
One should not expect deletion requests to be uniformly random. They will depend on the requesters' desires, characteristics, social circles, and the data itself.

We wish to understand how the requesters' behaviors when issuing deletion requests can influence model performance after unlearning. 
This will depend on characteristics of the requesters.

For instance, users may be able to see the outcomes of previous model updates before making the decision to delete their data. We call this property \emph{adaptivity}. We illustrate this concept in the following examples:

\vspace{4pt}
    \noindent\textbf{Example 1 (Adaptive Requests).} An online querying interface for a new LLM is made public. The model deployer promises to unlearn whenever it receives a deletion request, updating the model weights immediately. 
    Alice interacts with the model soon after it is released and decides against requesting deletion. However, after a few weeks during which the weights of the model were updated several times, Alice learns that the model is now regurgitating her personal information and she decides to request the removal of her data. 

\vspace{4pt}
    \noindent\textbf{Example 2 (Nonadaptive Requests).} A company is developing an open-weight model but wants to give the public the opportunity to request their data be unlearned before the weights are published. An online querying interface for the initial model is made public and individuals are allowed to request their data be deleted. After a period of time, all requests the company received are unlearned together and the final weights of the model are published. During the querying phase, Bob observes that the model does not regurgitate his personal information and does not request deletion. After the deletion requests are processed and the final model is published, however, the model does regurgitate Bob's personal information.

\vspace{4pt}
Alice has recourse because she can act adaptively: she can react to the effects of other deletion requests. Bob is nonadaptive: he's stuck with the model once it's published.

Users may also coordinate with each other when issuing deletion requests. We refer to this property as \emph{collectivity}, and the absence of this property as \emph{individuality}.
We illustrate collective and individual requests with the following examples.

\vspace{4pt}
    \noindent\textbf{Example 3 (Individual Requests).} A model is trained on data collected from participants in a medical study. Participants know that their data was used in training, but for privacy reasons do not know each other's identities and are unable to communicate with one another. When making the decision to request the deletion of their data, participants therefore have no knowledge of the training dataset beyond what is revealed by the model itself and their own data.

\vspace{4pt}
    \noindent\textbf{Example 4 (Collective Requests).} A company releases a language model trained on romance novels authored by members of the Romance Author Guild. Some of the individual authors are concerned about the presence of their work in the training data, but defer to the guild on whether or not to request their data be deleted. The guild holds a vote among its members and decides that they will all request the deletion of their data. 

\vspace{4pt}
In Example 3, participants in the study are incapable of exchanging information with one another, so each participant's decision to remove their data is independent of the training dataset conditioned on the initial state of the model. In Example 4, individuals only make the decision to delete after coordinating with others whose data was trained on.

\subsection{Contributions}

This paper asks: How do different deletion requester behaviors differently impact downstream model performance after unlearning?

\subsubsection{Characterizing Deletion Requesters}
We present two qualities that can characterize the behavior of deletion requests: \emph{adaptivity}, or the ability of a request to respond to previous requests and model updates, and \emph{collectivity}, or the ability to act on knowledge of the full training dataset. 
In order to study the effects of different classes of requesters, we consider the adversarially \emph{optimal requester} that degrades model performance as much as possible. 

\subsubsection{Understanding Gaps in Performance}
We define a simple learning setting within which we study the gaps between optimal requesters of different types.
In that setting, we reduce optimal unlearning requesters to instances of the (correlated stochastic) knapsack problem. We show separations between optimal collective and individual requesters, as well as between adaptive and nonadaptive requesters. These results imply that (1) personal information shared between individuals and (2) information leaked by the model itself could alter the impacts of deletion requests on model behavior.

\subsubsection{Designing Strong Requesters}
Since solving the knapsack problem is intractable in the worst case, we show how one can design requesters that outperform random requests. Drawing parallels between unlearning and data valuation, we design and implement requesters based on the Shapley value. We show that although these Shapley requesters are provably suboptimal in certain settings, they clearly outperform random requests in image classification experiments. We also empirically study how adaptivity and collectivity affect the performance of Shapley-based requesters.

\section{Related Work}
Machine Unlearning was first defined in its modern form by \citet{cao2015towards}, who propose modifications to existing learning systems to allow for unlearning.\ Follow up works provide more formal definitions and propose unlearning methods for more general learning algorithms~\cite{ginart2019making,bourtoule2021machine,guo2019certified}. A limitations of these definitions is that they fail to provide privacy guarantees when requests are allowed to be dependent on previous deletions and model updates. \citet{gupta2021adaptive} were the first to formalize this setting, which they called \emph{adaptive machine unlearning}. They show that the unlearning guarantees of the SISA unlearning framework~\cite{bourtoule2021machine} fail to hold against adaptive requesters, and give a method for transforming nonadaptive guarantees to adaptive guarantees using differential privacy.
Follow up works \citep{chourasia2023forget,cohen2023control, cohen2026protecting} further refine and strengthen these results, although non-adaptive definitions remain popular in the literature.

\subsection{Evaluations of Unlearning}
A simple framework for evaluating an unlearning method is as follows.
First, a model is trained on a training dataset. That dataset is then split into a forget set (the set of points to be unlearned) and a retain set (the set of points to be retained). The unlearning method is applied to the original model to remove the influence of the forget set, and a new model is trained from scratch on the retain set. These models are then used to measure the performance of the unlearning method.

Within the machine unlearning literature, there is little standardization in how performance is measured.\ \citet{li2025machine} provide a taxonomy of existing approaches. The metrics used to measure unlearning performance are divided into verification metrics, which can be used to test whether data in the forget set has been fully removed, and evaluation metrics, which measure how well an unlearning method approximates retraining from scratch.

Another aspect of machine unlearning evaluations is the selection method for the forget set.
A common approach is random forgetting: the forget set is sampled uniformly from the training set. 
Another approach is class unlearning, where the entirety of a given class is deleted from the training set~\cite{jia2023model}. 
Other papers seek to unlearn a collection of points that share certain qualities. For example, unlearning all of the points that have been targeted by a data poisoning attack~\cite{goel2024corrective}, or the set of points that encode a certain behavior in a generative model~\cite{ilharco2022editing}. \citet{carlini2022privacy} remove points that are most susceptible to membership inference attacks and find that new points become more susceptible to these attacks afterwards, highlighting unintended privacy implications of unlearning.

A distinct line of work has studied how the choice of forget set can impact the \emph{fairness} of a model. Several papers~\cite{oesterling2024fair,zhang2024forgotten,liu2025enabling} investigate fairness impacts when deletion requests are biased towards minority or majority groups.\ \citet{surve2025explaining} study the use of machine unlearning as an explainability technique for understanding fairness violations. They propose optimization techniques to identify the subset of the training data that contributes the most to the unfairness of the model.

\subsection{Data Valuation}
Our approach to evaluating the impact of deletion requests is inherently tied to the topic of data valuation. A sizable body of work supports the use of the Shapley value to assign importance scores to individual data points, especially in machine learning tasks \cite{ghorbani2019data,ghorbani2020distributional,wang2025data}. Other papers have argued for the use of other valuation algorithms, like those based on the least core \cite{yan2021if} and influence functions \cite{choe2024your}. 

\subsection{Data Leverage}
Similar concepts have been explored in the study of \emph{data leverage} as conceptualized by \citet{vincent2021data}. Data leverage describes the ways in which data subjects can utilize their data as a way to exert influence over users of that data. Of particular relevance to our work is the concept of a \emph{data strike}, in which users intentionally withold their data from technology companies.\ \citet{vincent2019data} develop a framework for defining different types of data strikes and run simulations to demonstrate how data strikes can impact recommender systems. Our work can be seen as characterizing the efficacy of data strikes where users are witholding their data using deletion requests. Unlike the work of \citet{vincent2019data}, whose results are based on simulations of simple models of deletion requests, our results reflect a worst-case analysis in which deletion requests are chosen to maximize impact on a model.

\section{Modeling Deletion Requests}\label{sec:modeling}
We now present definitions for machine unlearning and deletion requesters. Building on prior work, we define two qualities that a requester may possess, \emph{adaptivity} and \emph{collectivity}, and discuss how those qualities relate to how individuals in the real world might behave when given the ability to delete their data.

\subsection{Notation and Problem Setup}
We assume an arbitrary data domain $\X$ with labels in $\Y$ and a learning algorithm $\alg: (\X\times\Y)^* \rightarrow \Theta$.
An \emph{unlearning algorithm} for $\alg$ is an algorithm $R_\alg:(\X\times\Y)^*\times\Theta\times (\X\times\Y) \rightarrow \Theta$, that maps a labeled dataset $D \in (\X,\Y)^*$, a model $\theta$, and a sample $(x,y) \in D$ to some new model $\theta' \in \Theta$.

We work with labeled datasets $D \in (\X\times\Y)^*$ sampled from some data distribution $\D_f$ which are labeled according to a binary labeling function $f: \X \rightarrow \{0,1\}$. We will often refer to this data distribution as $\D$, where it is assumed that the labeling function $f$ is sampled from some predefined distribution $\F$ over binary functions. For a generic set $S = \{s_1,\ldots,s_n\}$, we use the notation $S_{-i}$ to denote the set containing all but the $i^{th}$ item in $S$.

Adopting the conventions from \citet{ginart2019making}, we present 
a definition for \emph{exact machine unlearning}.
\begin{definition}[Exact Machine Unlearning \citep{ginart2019making}]
     $R_\alg$ satisfies \emph{exact machine unlearning} if, for all $D = \{(x_1,y_1),\ldots,(x_n,y_n)\}$ and $i \in \{1,\ldots,|D|\}$, the random variables $\alg(D_{-i})$ and $R_\alg\big(D,\alg(D),(x_i,y_i)\big)$ are equal in distribution:
    \[\alg\big(D \setminus \{(x_i,y_i)\}\big) =_d R_\alg\big(D,\alg(D),(x_i,y_i)\big).\]
\end{definition}

\subsection{Deletion Requesters}
The object of study in this paper is \textit{deletion requesters}. 
A deletion requester chooses which points are deleted from a model and the order in which those points are deleted. 
Requesters have previously been studied mainly in the context of adaptive machine unlearning \citep{gupta2021adaptive, chourasia2023forget, cohen2026protecting}. 

The requester and the unlearning algorithm take part in a multi-round exchange. At each round (indexed by $t$), the requester outputs a deletion request $u^t \in \X\times\Y$ consisting of a labeled training sample to be deleted. We denote the state of the training dataset after round $t$ as $D^t$, where $D^0 = D$ and $D^t = D^{t-1} \setminus \{u^t\}$. After receiving the unlearning request, the unlearning algorithm outputs the updated state of the model $\theta^t$, such that $\theta^t = R_\alg(D^{t-1},\theta^{t-1},u^t)$ and $\theta^0 = \alg(D^0)$. 

\begin{definition}[Deletion Requesters]
A deletion requester is a stateful mapping $\req: ((\X\times\Y)^*\times\Theta)^+ \rightarrow (\X\times\Y)$ (possibly randomized) that takes as input an initial training dataset $D$ and the history of interaction between itself and the algorithms, and outputs a new datapoint to be deleted in the current round. Given a deletion requester $\req$ and algorithms $(\alg,\RA)$, the deletion sequence $U = \{u^t\}_t$ can be written as
    \begin{align*} 
        u^1 &= \req(D, \theta^0), \\
        u^2 &= \req(D, \theta^0,u^1,\theta^1), \\
        &\ldots\\
        u^t &= \req(D, \theta^0,u^1,\theta^1,\ldots,u^{t-1},\theta^{t-1}). 
    \end{align*}
\end{definition}
We will consider deletion requesters that have a \emph{deletion budget} $k$, the maximum number of deletions it may request.

\subsection{Adaptive and Nonadaptive Requesters}
\emph{Adaptive} deletion requesters are able to make decisions about which deletion requests to issue based off of previous model updates \citep{gupta2021adaptive, chourasia2023forget, cohen2026protecting}. 
Our definition of adaptive and nonadaptive deletion requesters is based on \citet{gupta2021adaptive} with some simplifications.\footnotemark~
Informally, a requester is nonadaptive if its deletion sequence is fixed before the first deletion request is sent.

\begin{definition}[Nonadaptive/Adaptive Requesters]\label{def:adaptive}
    A deletion requester is \emph{adaptive} if it is not \emph{nonadaptive}.

    A deletion requester $\req$ is \emph{nonadaptive} if it is independent of all models after time step 0. That is, if there exists a mapping $\req': (\X\times\Y)^*\times\Theta \rightarrow (\X\times\Y)$ which when given an initial training dataset $D$ and initial model $\theta^0$ yields an ordering of the training datapoints
    \[ (u^1,\ldots,u^t) = \req'(D, \theta_0)\]
    such that, for all $t \geq 1$,
    \[ u^t = \req\big(D, \theta^0,u^1,\theta^1,u^2\ldots,u^{t-1},\theta^{t-1}\big).\]
\end{definition}

\footnotetext{%
    \citet{gupta2021adaptive} allows for insertion and deletion; we consider only deletions. They consider a publishing function that passes information about the model to requesters; our requesters observe the full state of the model at each time step.
    They require nonadaptive requesters to be independent of all models; we allow our nonadaptive requesters to see the initial model $\theta_0$ before fixing the order of deletion requests.
}

\subsection{Individual and Collective Requesters}

A formal definition of individual and collective requesters is presented below.
To model individual requesters, we imagine that each individual $i$ to whom data $(x_i,y_i)$ relates decides when to request erasure independently using some score function $s$. The individual with the highest score requests erasure first.

\begin{definition}[Individual/Collective Requesters]\label{def:collective}
    A requester is \emph{collective} if it is not \emph{individual}.
    
    A deletion requester is \emph{individual} if it can be modeled as taking the highest value datapoints according to a deterministic scoring function $s:(\X\times\Y) \times \Theta^* \rightarrow \reals$ without ties. Concretely, for a given time step $t$ in the adaptive setting, 
    \[u^t = \argmax_{(x,y) \in D^{t-1}}s\Big((x,y),\theta^0,\ldots,\theta^{t-1}\Big),\]
    where $D^{t-1} = D \setminus\{u^1,\ldots,u^{t-1}\}$. In the nonadaptive setting, the sequence $(u_1, \ldots,u_t)$ are the pairs $(x_i,y_i)\in D$ sorted in ascending order by $s\big((x_i,y_i),\theta^0\big)$.
\end{definition}

We prohibit ties in the scoring function of individual requesters to enforce a meaningful distinction between individual and collective requesters. If ties could be broken arbitrarily and $s$ was a constant function, then an individual requester could implement a collective requester through its tie-breaking behavior. A more natural definition might allow for breaking ties randomly. However, this complicates the analysis of the optimal individual requester for reasons we discuss in Appendix~\ref{sec:ind_requesters}. An equivalent definition to ours would be to allow for ties in $s$ but require that ties be broken according to some fixed order (e.g., lexicographically).

\section{Goals of the Learner and Deletion Requester}

We seek to understand the impact that different types of requesters have on \emph{model performance}. We define a model performance metric as a function $g:\Theta \rightarrow \mathbb{R}$ which gives a real valued measurement of some property of a model. 
Of particular interest, and the focus in this paper, is expected error.
For a model $\theta: (\X\times\Y) \rightarrow \{0,1\}$, error is measured as
\[\err(\theta) = \expect_{(x,y) \sim \D}\mathbbm{1}[\theta(x) \neq y].\]
Given a distribution $\D$, the goal of an optimal learning algorithm 
will therefore minimize the following term:
\[\err^*(\D) :=\inf_\alg\expect_{D\sim\D} \err(\alg(D)).\]

Works in the machine unlearning literature tend to measure aspects of model performance before and after unlearning as a way to evaluate the suitability of proposed methods. The most common metric of this sort is test accuracy, which is largely standard across evaluations, but other metrics like group fairness~\citep{oesterling2024fair,chen2023fast} or empirical privacy~\citep{carlini2022privacy,chen2021machine} have also been studied.

What is considered less often in prior work is the connection between the deletion requester and measurements of model performance after unlearning. Different distributions of unlearning requests can lead to vastly different performance in downstream models. For example, unlearning the entirety of a class will likely degrade test accuracy much more than unlearning uniformly distributed training samples. 
It is therefore natural to characterize deletion requesters by the degree to which they alter model performance.

In order to study the worst-case effects of different classes of requesters, we give the requester an adversarial goal: degrade performance as much as possible.
We define a requester's \emph{utility} after $t$ rounds as the difference in performance measure $g$ between $\theta^0$ and $\theta^t$, where $\theta^t$ is the state of the model after $t$ rounds of unlearning.
\begin{definition}[Requester Utility]
    Let $D$ be a training dataset, $\theta^0 := \alg(D)$ be an initial model. Let $\theta^t$ be the state of the model after $t$ rounds of interaction between a deletion requester and a model trainer. A requester $\req$'s utility at time step $t$ with respect to $g:\Theta \to \reals$ (and implicitly with respect to $\alg$ and $R_\alg$) is defined as 
    \[\util_{\req,g}(\theta^0,D,t) = g(\theta^t) - g(\theta^0).\]
\end{definition}
Requester utility is defined as a random variable with randomness over $\alg$, $R_\alg$, and $\req$. At times, we will also reason about the \emph{expected requester utility}:
\begin{definition}[Expected Requester Utility]
    Let $\D$ be a be a distribution on training datasets in $(\X\times\Y)^*$. A requester $\req$'s expected utility at time step $t$ with respect to $g:\Theta \to \reals$ is defined as 
    \[\overline{\util}_{\req,g}(\D,t) = \expect_{\substack{f\sim\F\\D\sim\D_f}} \util_{\req,g}(\alg(D),D,t).\]
\end{definition}

We seek to characterize the abilities of \emph{optimal requesters}: those that maximize expected requester utility. For a given performance function $g$, we define the optimality of a deletion requester with respect to the data distribution $\D$, and number of deletions $k$: 
\begin{align}
    \opt_g(\D, k) = \sup_{\req}\;\overline{\util}_{\req,g}(\D,k).
\end{align}
This optimum is also dependent on $\alg$ and $R_{\alg}$, although we omit them from the notation for the sake of brevity.

\subsection{Adaptivity and Collectivity Gaps}
Our work focuses on gaps between the utility of adaptive and nonadaptive deletion requesters, as well as between collective and individual deletion requesters. Many prior works have studied similar gaps between different classes of algorithms. We present a definition for an \emph{adaptivity gap} similar to that of \citet{dean2008approximating}, who studied the impacts of adaptivity on the stochastic knapsack problem.
\begin{definition}[Adaptivity Gap \citep{dean2008approximating}]
    Let $\mathsf{Adapt}(I)$ be the  supremal expected value achieved by any adaptive policy 
    on a problem instance $I$. Let $\mathsf{Nonadapt}(I)$ be the supremal expected value achieved by any nonadaptive policy 
    on a problem instance $I$. The \emph{adaptivity gap} is defined as 
    \[\sup_{I} \frac{\mathsf{Adapt}(I)}{\mathsf{Nonadapt}(I)},\]
    where the supremum is taken over all problem instances.
\end{definition}

We also introduce what we call the \emph{collectivity gap}: the ratio between the expected utility achieved by the optimal collective requester and optimal individual requester.
\begin{definition}[Collectivity Gap]
    Let $\mathsf{Coll}(I)$ be the supremal expected value achieved by any collective policy on a problem instance $I$. Let $\mathsf{Ind}(I)$ be the supremal expected value achieved by any individual policy on a problem instance $I$. The \emph{collectivity gap} is defined as 
    \[\sup_{I} \frac{\mathsf{Coll}(I)}{\mathsf{Ind}(I)},\]
    where the supremum is taken over all problem instances.
\end{definition}
In this work we define these gaps over all instances of possible unlearning task, making use of the definitions of adaptivity and collectivity we presented in Section \ref{sec:modeling}. 

\section{Characterizing Unlearning Requesters}\label{sec:learning}

We want to understand how adaptivity and collectivity affect machine unlearning model performance.
To do so, we will study adaptivity and collectivity gaps.
Our results are summarized in Table~\ref{tab:gaps}.

Let's review the different optimization problems involved in doing so.
A learning problem instance is defined by a data domain $\X$, a label domain $\Y$, a data distribution $\D$, a distribution over labeling functions $\F$, and a perfomance metric $g$.
Given a learning problem, the optimal learning algorithm $\alg$ maximizes performance (i.e., minimizes expected error).
Given a learning algorithm $\alg$ and corresponding unlearning algorithm $\RA$, an optimal deletion requester $\req$ maximizes the degradation in performance (i.e., maximizes the change in expected error) subject to its deletion budget $k$. Deletion requesters of different types may degrade performance to a greater or lesser extent.

Our task, therefore, is to design learning problem instances where the optimal requesters of different types are differently able to degrade model performance, for optimal $\alg$ (and appropriate $\RA$).

That is what we do next.
We instantiate a simple learning task (loosely based on \cite{feldman2020does}) for which it is easy to reason about and characterize the optimal learning algorithms and requesters of different types.

\begin{table*}
    \centering
    \scalebox{1}{
    \begin{tabular}{ccc}
        \toprule
        Gap & Lower Bound & Upper Bound\\
        \midrule
        Individual Adaptive vs. Individual Nonadaptive & $\sim$1.06 & Open \\
        Collective Adaptive vs. Collective Nonadaptive & 1 & 1 \\
        Collective Adaptive vs. Individual Adaptive & 1.5 & Open \\
        Collective Nonadaptive vs. Individual Nonadaptive & 1.5 & Open \\
        \bottomrule
        \vspace{1pt}
    \end{tabular}
    }
    \label{tab:gaps}
    \caption[Gaps]{\textbf{Gaps in our simplified learning setting.} We provide lower and upper bounds for the gaps between optimal requesters of different types in our simplified learning setting. 
    The remaining lower bounds are derived in Section~\ref{subsec:gaps}. 
    To the best of our knowledge, upper bounds on collectivity gaps and the individual adaptivity gap have not been studied in prior work and are not addressed in this paper.
    }
\end{table*}

\subsection{A Simple Learning Problem}
We instantiate a data domain $\X$ consisting of $m$ subpopulations $\X = \{1,2,\dots,m\}$ with binary labels $\Y = \{0,1\}$. We define a distribution of datasets 
$\D$ on $(\X\times\Y)^*$ parameterized by $m$ values $\lambda_1,\lambda_2,\ldots,\lambda_m$ such that $\lambda_i \in \reals^+$. 
To sample a dataset $D$ from $\D$, we first sample a labeling function $f \sim \F$, where $\F$ is the uniform distribution over all binary functions on $\X$. For each $i \in \X$ we sample $n_i \sim \poisson(\lambda_i)$ and add $n_i$ copies of item $(i,f(i))$ to $D$. 
Owing to properties of the Poisson distribution, the expected size of $D$ is $N := \sum_{i\in\X}\lambda_i$. Additionally, since the expectation of $n_i$ is $\lambda_i$, the expected proportion of $D \sim \D$ made up of copies of subpopulation $i$ is $p_i = \frac{\lambda_i}{N}$. 
We refer to $p_i$ as the \textit{population frequency} of subpopulation $i$.\footnote{For the sake of clarity, we tend to parameterize $\D$ by $N$ and $p_1,\ldots,p_m$ rather than $\lambda_1,\ldots,\lambda_m$. These formulations are equivalent.}
We emphasize that $D$ is a multiset, in which a given point $i$ can (and likely will, for large enough $N$) occur multiple times. 
There is no label noise in this model; every sample from the same subpopulation will have the same label.

We now show that in our learning setting, the learning algorithm that simply memorizes the labels of the samples included in the training dataset and randomly guesses\footnotemark~on the rest is optimal. A similar result is proved in \citet[Theorem 2.3]{feldman2020does}. We provide a simplified justification in Appendix~\ref{app:claim}.
\begin{claim}\label{claim:opt-model}
For the problem described above, the minimum population error $\err^*(\D)$ is achieved by the learning algorithm $\alg^*$ which yields a model $\theta^* := \alg^*(D)$ of the form
\begin{align*}
    \theta^*(x) = \begin{cases}
        y &\text{if } (x,y) \in D\\
        \mathbbm{1}[H = 1] &\text{otherwise},
    \end{cases}
\end{align*}
where $H \sim \text{Bernoulli}\left(\frac{1}{2}\right)$.
\end{claim}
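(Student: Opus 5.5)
The plan is a Bayes-optimality argument carried out pointwise on each subpopulation. First I will decompose the error of an arbitrary algorithm. For any $\alg$,
\[
\expect_{f\sim\F,\,D\sim\D_f}\err(\alg(D)) = \sum_{i\in\X} p_i \Pr\big[\alg(D)(i)\neq f(i)\big],
\]
where the probability is over $f$, $D$ and the internal randomness of $\alg$. Here I read the test distribution as assigning mass $p_i$ to subpopulation $i$, labeled by $f(i)$. Since the population error is a $p_i$-weighted sum of per-subpopulation error probabilities, it is enough to show that $\alg^*$ minimizes each term $\Pr[\alg(D)(i)\neq f(i)]$ separately. $\alg^*$ is a single algorithm, so minimizing every term simultaneously means it attains $\err^*(\D)$.

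Next, for a fixed $i$, I will condition on the event that $i$ appears in $D$ and on its complement.
\begin{itemize}
\item If $n_i\ge 1$, then $D$ contains $(i,f(i))$, and $\alg^*$ outputs $f(i)$ because there is no label noise. The conditional error is $0$, which no algorithm can beat.
\item If $n_i=0$, I need the posterior of $f(i)$ given $D$. The labels $f(j)$ are independent uniform bits under $\F$. The counts $n_j$ are independent Poissons that do not depend on $f$. $D$ consists only of the counts and the labels $f(j)$ for $j$ with $n_j\ge1$. Together these give that $f(i)$ is independent of $D$ given $n_i=0$, so its posterior is still Bernoulli$(1/2)$. Any (possibly randomized) prediction $\alg(D)(i)$ is then a function of $D$ and randomness independent of $f(i)$, and therefore errs with probability exactly $1/2$. $\alg^*$ also errs with probability $1/2$ here, since its guess $H$ is independent of $f(i)$.
\end{itemize}

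Combining the two cases, every $\alg$ has $\Pr[\alg(D)(i)\neq f(i)]\ge \tfrac12\Pr[n_i=0]=\tfrac12 e^{-\lambda_i}$, and $\alg^*$ achieves this with equality. Summing over $i$ gives the claim, together with the explicit value $\err^*(\D)=\tfrac12\sum_i p_i e^{-Np_i}$.

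I expect the main obstacle to be stating the conditional-independence step rigorously: showing that $f(i)$ is independent of the entire multiset $D$ (including its size) given $n_i=0$, and handling the infimum over randomized algorithms cleanly. I will do this by writing the joint law of $(f,(n_j)_j)$ as a product measure, noting that $D$ is a measurable function of $(n_j)_j$ and $(f(j))_{j:n_j\ge1}$, and deducing independence. The remaining steps are routine.
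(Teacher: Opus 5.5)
Your proposal is correct and takes essentially the same approach as the paper's proof in the appendix. Both split the error into subpopulations present in $D$, where $\alg^*$ has zero error, and subpopulations absent from $D$, where the uniform prior on $f$ makes the label independent of $D$, so every algorithm errs with probability $1/2$. Your version is more careful than the paper's: it states explicitly why $f(i)$ is independent of the whole multiset $D$ given $n_i=0$, and it yields the closed form $\err^*(\D)=\tfrac12\sum_i p_i e^{-\lambda_i}$.
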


\footnotetext{\label{footnote:guessing}Any behavior in the case where $(x,y) \notin D$ will lead to equivalent performance, as the predictions will be independent of the true label. However, adopting this randomized behavior simplifies our analysis in Section~\ref{subsec:gaps}.}

Since the learning algorithm is memorizing training data and there is no label noise, the resulting model will be correct on all the subpopulations it has seen during training and will guess with 50\% accuracy on all the subpopulations it hasn't seen. We can therefore characterize expected population error of a model output by this learning algorithm:
\[\err(\alg^*(D)) = \sum_{i \in [m]} \frac{p_i\cdot\mathbbm{1}[(i,f(i)) \notin D]}{2}.\]

\subsection{Instantiating the Unlearning Problem}
We fix the unlearning algorithm to be retraining from scratch:
\[R_\alg(\theta,D,i) = \retrain_{\alg}(\theta,D,i) = \alg(D_{-i}).\]
Seeing as retraining is the ``gold standard'' for unlearning, this choice allows us to abstract away any imperfections of the unlearning algorithm and focus on the impact of deletion requests on the downstream models. 

As we allude to above, we focus on performance measure $g(\theta) = \err(\theta)$. Therefore, the goal of the optimal unlearning requester is to maximize the population error of the model output after $t$ rounds of unlearning. Within this setting, we can intuitively understand how a requester would achieve this goal. In an interaction with the optimal learning algorithm $\alg^*$, a deletion requester will only change the model if every instance of a subpopulation in the training dataset is unlearned. 
Consider the utility of a requester $\req$ after $k$ rounds of unlearning, letting $U$ be the multiset of $k$ points that are deleted. We denote by $D_i$ the number of occurrences of subpopulation $i$ in $D$, and define $U_i$ similarly.
\ifsubmission
\begin{align*}
    \util&_{\req,\err}(\theta^0,D,k) = \err(\theta^k) - \err(\theta^0)\\
    &= \frac{1}{2}\sum_{i \in [m]} p_i\Big(\mathbbm{1}[(i,f(i)) \notin D\setminus U] - \mathbbm{1}[(i,f(i)) \notin D]\Big)\\
    &= \frac{1}{2}\sum_{i \in [m]}p_i \cdot \mathbbm{1}[U_i > 0 \text{ and } U_i = D_i]).
\end{align*}
\else 
\begin{align*}
    \util_{\req,\err}(\theta^0,D,k) &= \err(\theta^k) - \err(\theta^0)\\
    &= \frac{1}{2}\sum_{i \in [m]} p_i(\mathbbm{1}[(i,f(i)) \notin D\setminus U] - \mathbbm{1}[(i,f(i)) \notin D])\\
    &= \frac{1}{2}\sum_{i \in [m]}p_i \cdot \mathbbm{1}[U_i > 0 \text{ and } U_i = D_i]).
\end{align*}
\fi
This shows us that the utility of a requester in this setting is completely determined by the total population frequency of the subpopulations that are completely removed from $D$. An optimal requester must maximize this value.

\subsection{Optimal Requesters and the Knapsack Problem}
Next we will see that optimal deletion requesters (adaptive or nonadaptive) for our learning problem must solve an instance of the knapsack problem. In the knapsack problem, we are given a list of \emph{items} $1,\ldots,m$, corresponding \emph{weights} $w_1,\ldots,w_m \in \mathbb{R}^+$ and \emph{values} $v_1,\ldots,v_m \in \mathbb{R}^+$ for each item, and a knapsack with a given \emph{capacity} $k \in \mathbb{N}$. The goal is to find the subset of items $S \subseteq[m]$ maximizing $\sum_{i\in S}v_i$ subject to $\sum_{i\in S}w_i \leq k$.

We reiterate that the task of the optimal requester is to increase the population error (thereby decreasing the accuracy) of a model as much as possible. Because of the structure of the optimal model (Claim~\ref{claim:opt-model}), error will only increase if all samples from a subpopulation are removed from $D$. In particular, once a subpopulation is fully removed, the accuracy of a model will decrease by that subpopulation's population frequency. We can therefore restate the task of the unlearning requester: Given a number of unlearning requests $k$ (\emph{capacity}), find the subset of subpopulations for whom the number of times they appear in $D$ (\emph{weights}) sum to no more than $k$ and the sum of whose population frequencies (\emph{values}) is maximized.

It is straightforward to show that an optimal \emph{collective} unlearning requester (either adaptive or nonadaptive) is solving an instance of the knapsack problem. Collective requesters can act on full knowledge of the training dataset and therefore are able to compute the weights corresponding to each subpopulation ($w_i = n_i$). Subpopulation frequencies are properties of the data distribution and accessible to any requester ($v_i = p_i$). Since we are not placing any restrictions on running time, a collective requester can simply compute the optimal solution to the knapsack problem and request the deletion of training samples represented by the subpopulations in the solution.

The connection between \emph{individual} requesters and knapsack is more complex. As stated above, all requesters can make use of subpopulation frequencies, but individual requesters must fix their scoring function $s$ before observing $D$, meaning they cannot act on full knowledge of the weights. Instead, individual requesters can only act on knowledge of the distribution that the weights were drawn from: in this case, a Poisson centered at $\lambda_i$. (For additional discussion on the capabilities of individual requesters, see Appendix~\ref{sec:ind_requesters}.) 

The optimal individual requester turns out to be solving an instance of the \emph{correlated stochastic} variant of the knapsack problem. Instead of being fixed, item weights in the stochastic knapsack problem are independent random variables drawn from known distributions~\citep{rothkopf1966scheduling,dean2008approximating,bhalgat2011improved}I.\footnote{The requirement of the stochastic knapsack problem that item sizes be independently sampled motivated our choice of data distribution, sampling subpopulations sizes from Poisson distributions rather than sampling datasets of a fixed size, as is done in \cite{feldman2020does}.}
An algorithm placing items into the knapsack is only able to see an item's realized weight after it is inserted; once an item overflows the knapsack, execution is terminated. 
An additional subtlety of our problem setting is the dependence between realized item weights and item values (i.e.,  $v_i = 0$ if $n_i = 0$). This corresponds to the \emph{correlated} stochastic knapsack problem \citep{gupta2011approximation}, where each item has an associated probability distribution over (size, reward) pairs $(\pi_{i,t},R_{i,t})$, where $\pi_{i,t}$ represents the probability that item $i$ has instantiated weight $t$ and $R_{i,t}$ is the value of item $i$ if it has instantiated size $t$.

\section{Theoretical Results}\label{subsec:gaps}
We now demonstrate the existence of gaps between individual and collective requesters and nonadaptive and adaptive requesters in our learning model. The lower bounds we present show how requesters that take advantage of collective and adaptive behaviors can have more influence over models after unlearning.

Note that while there are a number of prior works studying adaptivity gaps for stochastic knapsack~\cite{dean2008approximating,gupta2011approximation, barak2026stochastic}, none directly apply to our setting: the instances of stochastic knapsack induced by our learning/unlearning problem have extra structure.
A discussion of these results is included in Appendix~\ref{app:prior_gaps}.

\subsection{Adaptivity Gap (Collective)} 
It is straightforward to show that there will be no gap between collective adaptive and collective nonadaptive requesters when the unlearning algorithm is retraining from scratch. 
\begin{theorem}\label{thm:nogap}
    For all  $\X$, $\Y$, $\D$, $\F$, $\alg$, $g$, and $k$, and for $\RA = \retrain_\alg$ the adaptivity gap for collective requesters is equal to 1. 
\end{theorem}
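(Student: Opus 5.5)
Since every nonadaptive requester is in particular a (degenerate) adaptive one, $\mathsf{Nonadapt}(I)\le\mathsf{Adapt}(I)$ for every instance, so the gap is at least $1$. It therefore suffices to show the reverse inequality $\mathsf{Adapt}(I)\le\mathsf{Nonadapt}(I)$. I will do this by simulation: for any collective adaptive requester $\req$ I construct a collective nonadaptive requester $\req'$ whose expected utility after $k$ rounds is at least that of $\req$. Concretely, $\req'$ is allowed to see $D$ and $\theta^0$ before fixing its deletion sequence. Because $\RA=\retrain_\alg$, the model at round $t$ is $\theta^t=\alg(D^{t-1}\setminus\{u^t\})=\alg(D\setminus\{u^1,\dots,u^t\})$. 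This depends only on the initial dataset and the set of deleted points, not on the earlier models.

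The key step is that $\req'$, knowing $D$, can simulate the entire interaction internally. It first samples its own coin flips and the internal randomness of $\req$. It then sets $\tilde u^1=\req(D,\theta^0)$, computes a fresh sample $\tilde\theta^1\sim\alg(D\setminus\{\tilde u^1\})$, sets $\tilde u^2=\req(D,\theta^0,\tilde u^1,\tilde\theta^1)$, and continues in this way up to round $k$. Finally it commits to the ordering $(\tilde u^1,\dots,\tilde u^k)$ as its output of $\req'(D,\theta^0)$. This is a valid nonadaptive mapping in the sense of Definition~\ref{def:adaptive}: it is randomized, but it is fixed before the first deletion and ignores $\theta^1,\theta^2,\dots$.

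Next I argue by induction on $t$ that the joint law of $(D,\theta^0,\tilde u^1,\tilde\theta^1,\dots,\tilde u^t,\tilde\theta^t)$ equals the joint law of $(D,\theta^0,u^1,\theta^1,\dots,u^t,\theta^t)$ in the real adaptive interaction. At each step:
\begin{itemize}
\item the conditional law of the next request given the history is the same, since both are $\req$ applied to the history;
\item the conditional law of the next model given the history is the same, since both are the distribution of $\alg(D\setminus\{u^1,\dots,u^t\})$ under retraining with fresh randomness.
\end{itemize}
Now run the real nonadaptive interaction with the sequence $\tilde u^1,\dots,\tilde u^k$. Its final model $\theta'^k=\alg(D\setminus\{\tilde u^1,\dots,\tilde u^k\})$ has, conditional on $D$ and the deleted set, the same law as $\theta^k$ in the adaptive run. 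Therefore $g(\theta'^k)-g(\theta^0)$ and $\util_{\req,g}(\theta^0,D,k)$ are equal in distribution. Taking expectations over $f\sim\F$ and $D\sim\D_f$ gives $\overline{\util}_{\req',g}(\D,k)=\overline{\util}_{\req,g}(\D,k)$. Taking the supremum over $\req$ yields $\mathsf{Adapt}(I)\le\mathsf{Nonadapt}(I)$, and hence the ratio is exactly $1$ (with the convention $0/0=1$ when both are zero).

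The main obstacle is a subtlety in this distributional argument. The real $\theta'^k$ uses fresh randomness of $\alg$, independent of the simulated $\tilde\theta^k$, while $\theta^0$ is shared between the two runs. One must check that $\theta'^k$ has the correct conditional law given $(D,\theta^0,\text{deleted set})$. This requires that retraining randomness is independent of the randomness used to produce $\theta^0$, which holds because retraining runs $\alg$ afresh. The argument also needs collectivity: $\req'$ must see $D$ itself in order to compute the simulated retrained models. This is exactly why the same reasoning is not claimed for individual requesters.
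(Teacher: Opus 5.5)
Your proof is correct, but it takes a different route from the paper's. The paper argues by direct optimization. Under $\retrain_\alg$ the final model is $\alg(D\setminus U_k)$, so the best a requester can do is choose the $k$-subset $U\subseteq D$ maximizing $g(\alg(D\setminus U))-g(\theta^0)$. That problem involves only $D$ and $\theta^0$, so a collective nonadaptive requester can solve it and commit to the maximizer.

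You instead prove a simulation (coupling) statement: every collective adaptive requester is matched in distribution by a nonadaptive one that plays out the interaction internally with fresh runs of $\alg$ and then commits to the resulting sequence.

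The paper's route is shorter and exhibits an explicit, deterministic optimal nonadaptive requester. Since there are finitely many candidate subsets, the maximum exists, so it needs neither randomized nonadaptive requesters nor an argument about suprema.

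Your route makes explicit what the paper glosses over. When $\alg$ is randomized and $g$ depends on the unknown $f$ (as $\err$ does), the paper's $\argmax$ has to be read as maximizing $\expect[g(\alg(D\setminus U))\mid D,\theta^0]$. Its claim that intermediate models are useless rests on each retrain using fresh coins: given $D$, the intermediate models carry no information about $f$ or about the final model beyond the set of deleted points. Your coupling establishes this directly, and it never requires the requester to evaluate $g$ or a posterior expectation.

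Two small remarks. First, your joint-law identity should include $f$; it does hold, since the transcript depends on $f$ only through $D$. Second, your opening claim that nonadaptive requesters are degenerate adaptive ones uses the intended inclusive reading of ``adaptive,'' whereas Definition~\ref{def:adaptive} literally defines adaptive requesters as the complement of nonadaptive ones. The paper's own proof relies on the same inclusive reading.
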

\begin{proof}
    Let $D\sim\D$ be a random variable representing the initial training dataset. Since the unlearning method is $\retrain_\alg$, the final model $\theta^k$ will be the output of $\alg(D \setminus U_k)$, where $U_k$ is the set of unlearning requests issued by the requester after $k$ rounds of deletion. The optimal set of deletion requests will be that which maximizes the gap in performance between the initial and final model, i.e.
    \[\argmax_{\substack{U \subseteq D\\|U| = k}}\left[g(\alg(D \setminus U)) - g(\theta^0)\right].\]
    The requester which issues requests in this way will maximize expected utility. Notably, this optimization problem can be solved without knowledge of the intermediate model states $\theta^1,\ldots,\theta^{k-1}$. Since all collective requesters can act on full knowledge of the training dataset, both collective adaptive and collective nonadaptive requesters can solve this optimization problem, meaning that the optimal collective adaptive and collective nonadaptive requesters will achieve the same expected utility.
\end{proof}

\subsection{Adaptivity Gap (Individual)}
The intuition behind the existence of an individual adaptivity gap in this setting is simple: having observed the sizes of previously deleted subpopulations, a requester has more information to determine the optimal next subpopulation to remove. However, the parameters that induce an adaptivity gap for individual requesters in this setting are not obvious. 
Nevertheless, we now show that non-trivial adaptivity gaps can exist for individual requesters.

For simplicity, we will focus on the case of three subpopulations ($\X = \{1,2,3\}$). In this setting, the optimal nonadaptive requester $\mathtt{Nonadapt}$ outputs an ordering of the subpopulations $a,b,c$ that maximizes expected utility. The utility of this sequence is defined by the following expression:
\ifsubmission
\begin{align}
    \begin{split}
    \overline{\util}&_{\mathtt{Nonadapt},\err}(\D,k)\\ 
    &=\ p_a\cdot\Pr[n_a > 0 \text{ and } n_a \leq k]\\
    &\quad+ p_b\cdot\Pr[n_b > 0 \text{ and } n_a + n_b \leq k]\\
    &\quad+ p_c\cdot\Pr[n_c > 0 \text{ and } n_a + n_b + n_c \leq k]. 
    \end{split}\label{eq:nonadapt_util}
\end{align}
\else
\begin{align}
    \begin{split}
    \overline{\util}_{\mathtt{Nonadapt},\err}(\D,k) =\ &p_a\cdot\Pr[n_a > 0 \text{ and } n_a \leq k]\\
    &+ p_b\cdot\Pr[n_b > 0 \text{ and } n_a + n_b \leq k]\\
    &+ p_c\cdot\Pr[n_c > 0 \text{ and } n_a + n_b + n_c \leq k]. 
    \end{split}\label{eq:nonadapt_util}
\end{align}
\fi

Our goal is to find an adaptive requester which has higher utility than the optimal nonadaptive requester. We will consider a restricted class of adaptive requesters which we call \textit{thresholded adaptive requesters}. 
We define the \emph{thresholded adaptivity gap} to be the gap between the optimal nonadaptive requester and the optimal thresholded adaptive requester under a certain set of parameters.
The adaptivity gap is at least as large as the threshold adaptivity gap.

\begin{definition}[Thresholded Adaptive Requesters]
Given an integer $t$, \emph{thresholded adaptive requesters} are a class of requesters denoted by $\langle a,b,c\mid t \rangle$, where $a$, $b$, and $c$ are distinct subpopulations in $\X$. 
Such a requester would first request the deletion of all instances of subpopulation $a$. If $n_a > t$, the requester will then request all instances of subpopulation $b$, followed by all instances of subpopulation $c$. Otherwise, if $n_a \leq t$, the requester will then request all instances of subpopulation $c$, followed by all instances of subpopulation $b$.

\end{definition}

We now present the expected utility of $\langle a,b,c\mid t \rangle$:
\ifsubmission
\begin{align}
    \begin{split}
    &\overline{\util}_{\langle a,b,c\mid t \rangle,\err}(\D,k) \\
    &=\ p_a\cdot\Pr[n_a > 0 \land n_a \leq k]\\
    &+ \Pr[n_a > t] \cdot 
        \biggl(p_b\cdot\Pr[n_b > 0 \land n_a + n_b \leq k \mid n_a > t]\\
        &\quad\quad + p_c\cdot\Pr[n_c > 0 \land n_a + n_b + n_c\leq k \mid n_a > t]\biggr)\\
    &+ \Pr[n_a \leq t] \cdot 
        \biggl(p_c\cdot\Pr[n_c > 0 \land n_a + n_c \leq k \mid n_a \leq t]\\
        &\quad\quad+ p_b\cdot\Pr[n_b > 0 \land n_a + n_b + n_c\leq k \mid n_a \leq t]\biggr).
    \end{split}\label{eq:adapt_util}
\end{align}
\else
\begin{align}
    \begin{split}
    \overline{\util}_{\langle a,b,c\mid t \rangle,\err}(\D,k) =\ &p_a\cdot\Pr[n_a > 0 \land n_a \leq k]\\
    &+ \Pr[n_a > t] \cdot (p_b\cdot\Pr[n_b > 0 \land n_a + n_b \leq k \mid n_a > t]\\
    &\phantom{+ \Pr[n_a > t] \cdot (}+ p_c\cdot\Pr[n_c > 0 \land n_a + n_b + n_c\leq k \mid n_a > t])\\
    &+ \Pr[n_a \leq t] \cdot (p_c\cdot\Pr[n_c > 0 \land n_a + n_c \leq k \mid n_a \leq t]\\
    &\phantom{+ \Pr[n_a > t] \cdot (}+ p_b\cdot\Pr[n_b > 0 \land n_a + n_b + n_c\leq k \mid n_a \leq t]).
    \end{split}\label{eq:adapt_util}
\end{align}
\fi

We construct an instance of our learning problem with (threshold) adaptivity gap at least $1.064$. 
\begin{theorem}\label{thm:adaptivity_gap}
    The adaptivity gap for individual requesters is at least 1.064.
\end{theorem}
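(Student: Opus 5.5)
We prove Theorem~\ref{thm:adaptivity_gap} by exhibiting one concrete instance of the simple learning problem with $\X=\{1,2,3\}$. On that instance we compare the best thresholded adaptive requester $\langle a,b,c\mid t\rangle$ against the optimal nonadaptive requester, and show the ratio is at least $1.064$. Since thresholded adaptive requesters are adaptive, and individual, this ratio lower-bounds the individual adaptivity gap.

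First we justify that both requesters we compare are individual in the sense of Definition~\ref{def:collective}. A nonadaptive individual requester fixes a score on subpopulations from $\theta^0$ and the known $p_i$. Its behavior is therefore an ordering $a,b,c$ of subpopulations, with all copies of a subpopulation deleted consecutively. Splitting copies of subpopulations never helps, because utility only counts fully removed subpopulations. Hence the optimal nonadaptive value is the maximum of \eqref{eq:nonadapt_util} over the $6$ orderings.

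For $\langle a,b,c\mid t\rangle$ we realize the requester with an adaptive scoring function. Under retraining, the models $\theta^1,\ldots$ reveal when subpopulation $a$ has been fully removed (its prediction becomes the random guess). So the time of removal reveals $n_a$, and the score of $b$ versus $c$ can then be set according to whether $n_a>t$. We check this satisfies the no-ties requirement by adding a fixed tiebreak. Its value is \eqref{eq:adapt_util}.

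The intended instance makes adaptivity valuable. Take $a$ to be a subpopulation whose count $n_a$ has high variance relative to $k$; a small or moderate $\lambda_a$ with moderate $p_a$ works. Take $b$ to be a ``large'' subpopulation (large $p_b$, $\lambda_b$ near $k$), and $c$ a ``small'' one (small $\lambda_c$, small $p_c$). If $n_a$ turns out small, the remaining budget suffices to try $c$ and then possibly $b$. If $n_a$ is large, it is better to go straight for $b$, or the reverse, depending on the parameters. Concretely, I would fix $k$ (e.g.\ $k$ around $10$), parameterize $\lambda_1,\lambda_2,\lambda_3$, and let $p_i=\lambda_i/N$. Then:
\begin{itemize}
\item because the $n_i$ are independent Poisson variables, \eqref{eq:nonadapt_util} and \eqref{eq:adapt_util} become finite sums of products of Poisson pmf values, since all events are constrained by sums $\le k$;
\item we numerically search over $(\lambda_1,\lambda_2,\lambda_3,t)$ to maximize the ratio;
\item we report explicit parameters and evaluate both expressions exactly, for example by listing the truncated sums. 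The ratio of the best thresholded value to the maximum of the six nonadaptive values then exceeds $1.064$.
\end{itemize}

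The main obstacle is finding parameters that give a gap this large. Note that coupling values to weights via $p_i\propto\lambda_i$ limits freedom: we cannot independently pick a high-value small-weight item. If the search stalls, I would exploit the fact that subpopulations not among $\{1,2,3\}$ are not required. Equivalently, the normalization $N$ only rescales all utilities, so the ratio depends only on the $\lambda_i$ and $k$; this reduces the search to four integer and real parameters. The remaining verification is a routine, if tedious, exact evaluation of the finite Poisson sums, which I would present in a table or appendix.
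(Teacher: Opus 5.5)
\textbf{Gap: the witness is missing.} Your framework is the paper's: three subpopulations, thresholded requesters $\langle a,b,c\mid t\rangle$ evaluated by \eqref{eq:adapt_util}, compared against the best of the six orderings in \eqref{eq:nonadapt_util}, with $N$ cancelling from the ratio. But the theorem's entire content is a concrete instance, and you never produce one. You only assert that a numerical search ``then exceeds $1.064$.'' That step is not routine, and the region you suggest is unlikely to deliver it. The paper's grid search at $N=10$, $k=8$ over all frequency vectors reached only about $1.011$. The value $1.064$ was found at $N=50$, $k=40$, with frequencies about $0.41, 0.42, 0.18$, i.e.\ expected counts of roughly $20$, $21$, $9$.

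Your qualitative shape is also not the one that works. If some subpopulation has $\lambda_b$ near $k$, it essentially never fits once anything else has been deleted, so there is little to adapt to. What works is two large subpopulations whose expected sizes together roughly equal $k$, plus a small ``safe'' one. Delete one large subpopulation first and threshold its realized count near $k/2$:
\begin{itemize}
\item if it came out small, the leftover budget makes gambling on the other large subpopulation worthwhile;
\item if it came out large, secure the small one first.
\end{itemize}
To complete the proof you must exhibit such parameters and actually evaluate both expressions.

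\textbf{Secondary issue: the nonadaptive optimum.} You justify the adaptive requester by saying the retrained models reveal when a subpopulation vanishes, because its prediction becomes the random guess. By the same mechanism, $\theta^0$ reveals which $n_i$ are zero. So a nonadaptive individual requester may choose its ordering as a function of that support pattern. The maximum of \eqref{eq:nonadapt_util} over six fixed orderings is then only a \emph{lower} bound on the nonadaptive optimum. That is the wrong direction when you are lower-bounding the ratio. The effect is negligible when every $\lambda_i$ is at least about $9$, as in the instance above, since then $\Pr[n_i=0]\le e^{-9}$. It is not negligible in the small-$\lambda$ regime you propose to search, so there you would need to compare against support-dependent orderings.
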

\begin{proof}
    Let $\X = \{1,2,3\}$ and $\Y = \{0,1\}$. Let $\D$ have parameters $\lambda_1 = 0.41$, $\lambda_2 = 0.42$, $\lambda_3 = 0.18$, and $N = 50$, and let $\F$ be the uniform distribution on $\Y^\X$. Let $g = \err$ and $k = 40$. The ratio between the utility of the optimal individual adaptive and individual nonadaptive requesters under these parameters is approximately 1.064.
\end{proof}
To find this instance, we first fixed $N=10$ and $k=8$ and performed a grid search over parameters $p_1 + p_2 + p_3 = 1$ with a step size of $0.01$. For each choice of $(p_1,p_2,p_3)$, we explicitly computed the expected utility of the optimal nonadaptive requester~\eqref{eq:nonadapt_util}, as well as the expected utility of the all thresholded adaptive requesters of the form $\langle a,b,c \mid t \rangle$~\eqref{eq:adapt_util} where $(a,b,c)$ is a permutation of $(1,2,3)$ and $0 \leq t \leq k$. 
Figure~\ref{fig:heatmap} in Appendix~\ref{app:viz} plots the thresholded adaptivity gap over the parameters of our grid search, with a maximum of about $1.011$.

Observing that the highest values are achieved when two of the subpopulation frequencies are approximately equal and larger than the third probability, we performed a further grid search over probabilities of the form $p_1 = p_2 = \epsilon$, $p_3 = 1 - 2\epsilon$. The results are in Figure~\ref{fig:square_heatmap} in Appendix~\ref{app:viz}. For this search, we varied $\epsilon$  between $0.33$ and $0.50$ with a step size of $0.01$, varied integer values of $N$ between 10 and 50, and fixed and $k = \lfloor0.8N\rfloor$. We observe a maximum thresholded adaptivity gap of approximately $1.064$

\subsection{Collectivity Gap (Nonadaptive)}
The separation between individual nonadaptive
and collective nonadaptive requesters can be shown through a simple example. Here's the intuition. Consider the case where there are two subpopulations, each of which has a population frequency $p_i = \frac{1}{2}$ (i.e. $\X = \{1,2\}$, $\D$ has parameters $\lambda_1 = \lambda_2 = \frac{N}{2}$). The counts of each subpopulation in our training dataset will be distributed as Poissons, as described in Section~\ref{sec:learning}: 
\(n_i \sim \poisson\left(\lambda_i\right).\)
Let $k = \frac{N}{2}$. Exactly one of the $n_i$ will be less than $k$ with probability about $1/2$. When this happens, the only way to affect the model is to remove the smaller subpopulation. A collective requester knows which subpopulation is smaller while an individual requester can only guess.

\begin{theorem}\label{thm:nonadaptive_collective}
    For any $\epsilon>0$, the collectivity gap for nonadaptive requesters is at least $\frac{3}{2}-\epsilon$.
\end{theorem}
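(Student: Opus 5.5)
The plan is to formalize the two-subpopulation example sketched just before the statement, and let $N\to\infty$. Take $\X=\{1,2\}$, $\Y=\{0,1\}$, $\F$ uniform, $\lambda_1=\lambda_2=N/2$ (so $p_1=p_2=\tfrac12$), $g=\err$, $\alg=\alg^*$ from Claim~\ref{claim:opt-model}, $\RA=\retrain_\alg$, and budget $k=\lfloor N/2\rfloor$. By the utility computation following Claim~\ref{claim:opt-model}, any requester's utility is $\tfrac12\sum_i p_i\mathbbm{1}[0<U_i=n_i]$. So it suffices to compute, up to $o(1)$ terms, the best expected utility of a collective nonadaptive requester and of an individual nonadaptive requester, and show that their ratio tends to $3/2$.

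\emph{Collective side (lower bound).} Consider the collective nonadaptive requester that reads $D$ and deletes all copies of the smaller subpopulation whenever that subpopulation is nonempty and has at most $k$ copies. Its expected utility is at least $\tfrac14\Pr[0<\min(n_1,n_2)\le k]$. Since $n_1,n_2$ are independent, this probability equals
\[
1-\Pr[n_1>k]^2-\Pr[\min(n_1,n_2)=0].
\]
By the central limit theorem for $\poisson(N/2)$ with $k=\lfloor N/2\rfloor$, we have $\Pr[n_i\le k]\to\tfrac12$. The term $\Pr[\min=0]\le 2e^{-N/2}$ vanishes. Hence $\mathsf{Coll}\ge \tfrac14\cdot\tfrac34-o(1)=\tfrac{3}{16}-o(1)$. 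Scaling conventions with $p_i$ and the factor $\tfrac12$ cancel in the ratio, so I will not fuss over them.

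\emph{Individual side (upper bound).} This is the step that needs care. By Definition~\ref{def:collective}, a nonadaptive individual requester orders the points by a deterministic score $s((x,y),\theta^0)$. All copies of a subpopulation are identical points, so they receive identical scores. The no-ties condition then forces the order to be one whole subpopulation followed by the other. Call the first one $a(\theta^0)$.

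The key observation is that $\theta^0$ is determined by $f$, the support of $D$, and the coin $H$. On the event $E=\{n_1,n_2>0\}$, $\theta^0=f$, and $f$ is independent of $(n_1,n_2)$. Hence, conditioned on $E$, the choice $a$ is independent of the counts.

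The utility is then bounded by $\tfrac14\big(\mathbbm{1}[n_a\le k]+\mathbbm{1}[n_1+n_2\le k]\big)$. Conditioning on $f$ and using independence, its expectation is at most
\[
\tfrac14\big(\Pr[n_a\le k]+\Pr[n_1+n_2\le k]\big)+\Pr[E^c].
\]
Here $\Pr[n_a\le k]\to\tfrac12$. The term $\Pr[n_1+n_2\le N/2]$ vanishes because $n_1+n_2\sim\poisson(N)$, by a Chernoff bound. The term $\Pr[E^c]$ is exponentially small. This gives $\mathsf{Ind}\le\tfrac18+o(1)$.

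\emph{Conclusion and main obstacle.} Combining the two bounds, the ratio is at least $\frac{3/16-o(1)}{1/8+o(1)}\to\frac32$. Choosing $N$ large enough for the given $\epsilon$ yields the bound $\tfrac32-\epsilon$. I expect the main obstacle to be justifying rigorously that the individual requester gains nothing from $\theta^0$. This requires handling the degenerate event $E^c$, where $\theta^0$ leaks emptiness through $H$, and invoking the no-ties rule to rule out splitting a subpopulation. The CLT limit $\Pr[\poisson(N/2)\le\lfloor N/2\rfloor]\to\tfrac12$ is routine by comparison.
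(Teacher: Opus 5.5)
Your proposal is correct and follows the paper's proof. It uses the same two-subpopulation instance with $\lambda_1=\lambda_2=N/2$ and $k\approx N/2$, the same smaller-first collective requester achieving $3/16$, and the same $1/8$ bound for individual requesters, giving a ratio that tends to $3/2$; your independence argument (on $E=\{n_1,n_2>0\}$ the model $\theta^0=f$ reveals no count information, and the no-ties rule forces whole-subpopulation orderings) is a more careful justification of the paper's claim that an individual requester ``can do no better than guessing,'' and taking $k=\lfloor N/2\rfloor$ fixes the integrality point the paper glosses over.
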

\begin{proof}
Let $\X = \{1,2\}$ and $\Y = \{0,1\}$. Let $\D$ have parameters $\lambda_1 = \lambda_2 = \frac{N}{2}$ and $\F$ be the uniform distribution on $\Y^\X$. Let $g = \err$ and $k = \frac{N}{2}$.
We will show that as $N \to \infty$ the collectivity gap approaches $3/2$.
Throughout this proof, we use $\to$ to denote limits as $N\to\infty$.

For large $N$, $\poisson\left(\frac{N}{2}\right)$ approaches a Gaussian centered at $\frac{N}{2}$. In particular, the following all hold: $\Pr[n_1 > k] \to 0.5$, $\Pr[n_2>k] \to 0.5$, $\Pr[n_1 = 0] \to 0$, $\Pr[n_2 = 0] \to 0$, and $\Pr[n_1 + n_2 \leq k] \to 0$.

To determine a lower bound on the nonadaptive collectivity gap, we will compute the values of both $\overline{\util}_{\mathtt{Coll},\err}(\D,k)$ and $\overline{\util}_{\mathtt{Ind},\err}(\D,k)$, where $\mathtt{Coll}$ is the optimal collective requester and $\mathtt{Ind}$ is the optimal individual requester (both nonadaptive). For a requester $\req$, the expected utility is defined as
\ifsubmission
\begin{align*}
    \overline{\util}&_{\req,\err}(\D,k) = \expect_{D \sim \D} [\err(\theta^k) - \err(\theta^0)]\\
    &= \expect_{D \sim \D} \err(\alg(D \setminus U_\req))
    - \expect_{D \sim \D} \err(\alg(D)),
\end{align*}
\else  
\begin{align*}
    \overline{\util}_{\req,\err}(\D,k) &= \expect_{D \sim \D} [\err(\theta^k) - \err(\theta^0)]\\
    &= \expect_{D \sim \D} \err(\alg(D \setminus U_\req)) - \expect_{D \sim \D} \err(\alg(D)),
\end{align*}
\fi
where $U_\req$ is the set of $k$ deletion requests issued by $\req$. We therefore need to compute $\expect_{D \sim \D} \err(\alg(D))$, $\expect_{D \sim \D} \err(\alg(D \setminus U_\mathtt{Coll}))$, and $\expect_{D \sim \D} \err(\alg(D \setminus U_\mathtt{Ind}))$

First, we look at the expected error of the optimal learning algorithm on the original training dataset. Since it is very likely that both subpopulations appear in $D$, and since the optimal learning algorithm memorizes labels in the training dataset, we can conclude that
\[\expect_{D \sim \D}[\err(\alg^*(D))] \to 0.\]

A collective requester $\mathtt{Coll}$, having full knowledge of $n_1$ and $n_2$, would be able to choose the subpopulation corresponding to the smaller $n_i$ and request its deletion first, followed by the other subpopulation. The probability that $\mathtt{Coll}$ will be able to delete the entirety of the first subpopulation it removes is equal to the probability that $n_1 \leq k$ or $n_2 \leq k$, which is $\Pr[n_1 \leq k \text{ or } n_2 \leq k] = 1 - \Pr[n_1 > k \text{ and } n_2 > k] \to 1 - \frac{1}{4} = \frac{3}{4}$.  The probability that $\mathtt{Coll}$ will be able to delete the entirety of both subpopulations is equal to the probability that $n_1 + n_2 \leq k$, which as stated above approaches 0. If $\mathtt{Coll}$ is able to delete a subpopulation, then the expected error will approach $\frac{1}{4}$, since the optimal model will now have to guess with 50\% accuracy on that subpopulation, which has a population freqency of $\frac{1}{2}$. If it is not, then the expected error will be 0 since the correct labels for both subpopulations are still present in the updated training dataset. 
By the law of total expectation,
\[\expect_{D \sim\D}[\err(\alg^*(D\setminus U_\mathtt{Coll}))] \to \frac{3}{4} \cdot \frac{1}{4} + \frac{1}{4} \cdot 0 = \frac{3}{16}.\]

An individual requester $\mathtt{Ind}$, on the other hand, cannot not know which of $n_1$ and $n_2$ is smaller. It can do no better than guessing which class to delete. In the case where $n_1$ and $n_2$ are  both greater than $k$ (occurring with probability approaching $\frac{1}{4}$), the expected error of the model after unlearning will be 0. In the case where $n_1$ and $n_2$ are both less than or equal to $k$ (w.p.\ approaching $\frac{1}{4}$), the expected error of the model after unlearning will be $\frac{1}{4}$. In the case where one of $n_1$ and $n_2$ is less than or equal to $k$ and the other is greater (w.p.\ approaching $\frac{1}{2}$), $\mathtt{Ind}$ will choose a random subpopulation to delete and be correct with probability approaching $\frac{1}{2}$.  The expected error of the model after unlearning will be $\frac{1}{2} \cdot \frac{1}{4} = \frac{1}{8}$. 
Then, 
\[\expect_{D \sim\D}[\err(\alg^*(D\setminus U_\mathtt{Ind}))] \to \frac{1}{4} \cdot \frac{1}{4} + \frac{1}{2} \cdot \frac{1}{8} + \frac{1}{4} \cdot 0 = \frac{1}{8}.\]

Putting it all together, we can compute that the expected utilities of the optimal nonadaptive collective and individual requesters:
\[\overline{\util}_{\mathtt{Coll},\err}(\D,k) \to \frac{3}{16} 
\quad \quad
\overline{\util}_{\mathtt{Ind},\err}(\D,k) \to \frac{1}{8} \]
Therefore, for any $\epsilon>0$ the collectivity gap for unlearning is as least $\frac{\overline{\util}_{\mathtt{Coll},\err}(\D,k)}{\overline{\util}_{\mathtt{Ind},\err}(\D,k)} = \frac{3}{2}-\epsilon$.
\end{proof}

\subsection{Collectivity Gap (Adaptive)}
Next, we show that the same instance that gives the nonadaptive collectivity gap in Theorem~\ref{thm:nonadaptive_collective} also gives an \emph{adaptive} collectivity gap. 
To prove it, we show that the optimal strategies for adaptive requesters (both collective and individual) are the same as their corresponding nonadaptive variants.
\begin{theorem}\label{thm:adaptive_collective}
For any $\epsilon > 0$, the collectivity gap for adaptive requesters in this setting is at least $\frac{3}{2}-\epsilon$.
\end{theorem}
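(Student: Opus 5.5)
We reuse the instance from the proof of Theorem~\ref{thm:nonadaptive_collective}: $\X=\{1,2\}$, $\lambda_1=\lambda_2=\frac{N}{2}$, $g=\err$, $k=\frac{N}{2}$, with $N\to\infty$. The plan is to show that adaptivity does not help either class on this instance. Then the adaptive optima equal the nonadaptive optima, and the ratio $\frac{3/16}{1/8}=\frac{3}{2}$ carries over.

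\textbf{Collective side.} Theorem~\ref{thm:nogap} applies directly, since $\RA=\retrain_\alg$. It gives that the optimal collective adaptive requester has the same expected utility as the optimal collective nonadaptive requester, which tends to $\frac{3}{16}$ by the previous proof. For the lower bound on the gap we only need the collective adaptive value to be at least this, and that is immediate because nonadaptive requesters are a special case of adaptive ones.

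\textbf{Individual side.} We must upper bound the best individual adaptive requester by $\frac{1}{8}+o(1)$. An individual requester uses a deterministic tie-free score $s((x,y),\theta^0,\ldots,\theta^{t-1})$. All copies of subpopulation $i$ are the identical point $(i,f(i))$, so their scores are equal, and the requester effectively chooses a subpopulation to delete. The key observation concerns what the intermediate models reveal under $\alg^*$ with retraining: $\theta^t$ is unchanged (up to the independent coin $H$) until some subpopulation is fully deleted. Hence, while the first chosen subpopulation $a$ is still being deleted, the models carry no information about $n_a$ or $n_b$, beyond the negligible event $n_i=0$. The choice of $a$ is therefore a function of $f$ and the coins alone, which are independent of $(n_1,n_2)$. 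Switching to $b$ before finishing $a$ is possible but can only help if $n_b\le k-(\text{deleted from }a)$, and it still carries no information about which subpopulation is smaller. The cleanest route is to note that the utility is at most $\frac{1}{4}\Pr[\text{some subpopulation fully deleted}]+\frac12\Pr[n_1+n_2\le k]$. Any subpopulation fully deleted within budget is first reached by an information-free schedule, so by symmetry the success probability is at most the probability attained by the nonadaptive guess, which is $\Pr[n_1\le k, n_2\le k]+\frac12\Pr[\text{exactly one}\le k]\to \frac14+\frac14=\frac12$. This gives $\frac{1}{4}\cdot\frac12=\frac18$.

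\textbf{Main obstacle.} The hardest step is making rigorous that interleaved or partial deletions give no advantage to an uninformed schedule. Here is how to handle it. The requester's deletion order before the first full removal is a fixed (randomized, but independent of $n$) sequence of subpopulation labels. A subpopulation $i$ is fully removed within budget only if $n_i\le k$, and the ordering affects only which of the two qualifying subpopulations is removed. When exactly one of $n_1,n_2$ is at most $k$, the event that the removed one is the small one requires the schedule to have committed at least $n_i$ deletions to $i$. By the exchangeability of $n_1,n_2$, which are i.i.d.\ and independent of the schedule, this has probability at most $\frac12+o(1)$, since the concentration of $n_i$ near $\frac N2=k$ forces nearly the entire budget onto one label. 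Combining this with the limits already computed yields the claimed gap of $\frac32-\epsilon$.
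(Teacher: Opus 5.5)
Your proof is correct, but on the individual side it takes a different route from the paper.

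The paper proves a structural no-preemption statement. Suppose an optimal individual adaptive requester abandoned subpopulation $a$ at some step $1\le t<k$ in favor of $b$. Then $\Pr[n_a\le k\mid n_a>t]<\Pr[1\le n_b\le k-t]$, and the paper refutes this using independence and the fact that the $\poisson(k)$ PMF is nondecreasing on $[0,k]$. So the optimal individual adaptive requester is nonpreemptive, hence no better than a nonadaptive one, and the value $\frac18$ from Theorem~\ref{thm:nonadaptive_collective} carries over.

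You instead make explicit the information structure the paper uses only implicitly, namely that the requester's knowledge at step $t$ is just ``nothing exhausted yet.'' Under $\alg^*$ with retraining the model is constant until some subpopulation is fully removed. So every individual adaptive requester follows a count-independent, possibly interleaved, schedule up to its first removal. You then bound the probability that any such schedule achieves a removal by $\frac12+o(1)$, using symmetry and concentration rather than PMF monotonicity.

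Each route buys something different:
\begin{itemize}
\item Your route handles arbitrary interleavings, including preempting and later returning. It also bounds expected utility directly. The paper's single-switch comparison of completion probabilities treats both of these points only informally.
\item The paper's route, if accepted, gives a stronger finite-$N$ conclusion: adaptivity gives individual requesters nothing on this instance for every $N$. You obtain only the asymptotic bound $\frac18+o(1)$, which is all the theorem needs.
\end{itemize}

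Your last paragraph can be made rigorous in one line. If the count-independent schedule allots $c_1+c_2\le k$ deletions to the two labels, then some $c_i\le k/2$. Hence, for $n\sim\poisson(k)$,
\[
\Pr[\text{no removal}]=\Pr[n_1>c_1]\Pr[n_2>c_2]\ge\Pr[n>k]\,\Pr[n>k/2]\to\tfrac12 .
\]
Averaging over any internal randomness of the schedule preserves this bound.

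Two wordings need fixing:
\begin{itemize}
\item The claim that the ordering ``affects only which of the two qualifying subpopulations is removed'' is false for interleaved schedules, which may remove neither. It errs only in the direction of your upper bound, so no harm is done.
\item The paper defines adaptive requesters as those that are \emph{not} nonadaptive. So your remark that nonadaptive requesters are a special case of adaptive ones is not literally true here. Rely on Theorem~\ref{thm:nogap}, which you also cite.
\end{itemize}
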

\begin{proof}
    We assume the same two-subpopulation setting as in the proof of Theorem~\ref{thm:nonadaptive_collective}. As shown by Theorem~\ref{thm:nogap}, the optimal collective adaptive and collective nonadaptive requesters in this setting achieve the same utility. Therefore, we only need to compute the optimal utility of an individual adaptive requester.

    Suppose for contradiction that an optimal requester $\req$ begins by issuing deletion requests for subpopulation $a$ and at some time step $1\le t< k$ (before $a$ is completely deleted) preempts $a$ and begins issuing requests for subpopulation $b$. That would imply that at time step $t$, the probability of completely removing subpopulation $b$ was higher than completely removing $a$. In other words, 
    \[\Pr[n_a \leq k | n_a > t] < \Pr[1\le n_b \leq k - t | n_a > t].\] 
    But the above is false. Subtracting these two values, we get
    \begin{align*}
        \Pr[n_a \leq k &| n_a > t] - \Pr[1\le n_b \leq k - t | n_a > t] \\
        &= \frac{\Pr[t < n_a \leq k]}{\Pr[t < n_a]} - \Pr[1\le n_b \leq k - t]\\
        &>\Pr[t < n_a \leq k] - \Pr[0< n_b \leq k - t]\\
        &> 0.
    \end{align*}
    The equality is by independence of $n_a$ and $n_b$. 
    The final inequality is due to the fact that the PMF of a Poisson distribution centered at $k$ is increasing on $[0,k]$. 
    
    Therefore,  at no time $1\le t<n_a$ before the first subpopulation is deleted will it be optimal to preempt and begin deleting the second subpopulation. The optimal strategy for an individual adaptive requester is therefore nonpreemptive, so the collectivity gap will be the same for both adaptive and nonadaptive requesters. From Thm.~\ref{thm:nonadaptive_collective}, the gap is thus $\frac{3}{2}-\epsilon$, for $\epsilon>0$.
\end{proof}

\section{Adaptivity and collectivity in practice}
Our analysis in 
Section~\ref{sec:learning} abstracts away many of the properties 
that make realistic learning tasks difficult, like complex data distributions 
and label noise.
Although designing \emph{optimal} unlearning requesters in a realistic learning 
setting is likely intractable, there are a variety of heuristic methods that can 
be used to outperform random requesters. In this section, we adopt methods from 
the data valuation literature to design requesters that cause significant 
degradation in image classification models. The requesters we design help explain how and when requesters will outperform random deletion requests, as well as help clarify the impacts of adaptivity and collectivity.

\subsection{Data Valuation-Based Requesters}
\label{sec:shapley}
Individuals may be motivated by the \emph{value} of their data when requesting deletion. If an individual's data is very important to a model's decisions, the user may feel that they deserve compensation. If their data is unimportant, they may care less about their presence in the training data. We look to the data valuation literature to codify this behavior.
We build a requester around the Data Shapley value, a valuation method that has seen varied 
applications in machine learning~\cite{ghorbani2019data}.

\begin{definition}[Data Shapley Value]\label{def:shap}
    Let $D = \{(x_j,y_j)\}_{j \in [n]}$ be a dataset and 
    $\fn: (\X\times\Y)^* \rightarrow \mathbb{R}$ be an arbitrary utility 
    function mapping datasets to scores. The data Shapley value for the $j^{th}$ point in $D$ is defined as 
    \[\phi_j(D,\fn) = \frac{1}{n}\cdot\sum_{S \subseteq D \setminus\{(x_j,y_j)\}}\frac{\fn(S \cup\{(x_j,y_j)\}) - \fn(S)}{\binom{n - 1}{|S|}}.\]
\end{definition}
When it is clear from context, we will use $\phi_j$ to refer to $\phi_j(D,\fn)$. 
The utility function $\fn$ in Definition~\ref{def:shap} can represent an arbitrary function of a dataset, such as the expected accuracy of a model trained on that dataset.
Informally, $\phi_j$ is one way to measure how much value $(x_j,y_j)$ contributes to $\fn(D)$. 
We utilize this insight to design the Shapley requester, which removes points ordered by their value to some utility function.
\begin{definition}[Shapley Requester]
    For each index $i \in [n]$, the Shapley requester $\shap$ computes $\phi_i(D,\fn)$ and returns the samples in $D$ ordered by decreasing values of $\phi_i$, with ties broken randomly.
\end{definition}

There is reason to believe that the Shapley requester is an effective unlearning requester in practice. In the data valuation literature, techniques are often evaluated by removing points from a dataset ordered by valuation and retraining models on the progressively smaller datasets (see \citet[Figure 2]{ghorbani2019data}, \citet[Figure 1]{yan2021if}). These evaluations show that ordering deletions according to valuation metrics like the Shapley value can resulting a more rapid degradation in model performance than random orderings, indicating a positive correlation between a data point's value and its contribution to unlearning utility.

As we have defined it, the Shapley requester is \emph{nonadaptive} and \emph{collective}; the ordering of deletions is determined based only on the initial dataset, and computing a Shapley value with respect to a dataset $D$ requires full knowledge of $D$. We make minor modifications to this requester in order to isolate the effects of adaptivity and collectivity on requester performance. 

First, we define the \emph{adaptive Shapley requester}. 
The adaptive Shapley requester recomputes the Shapley values after each deletion with respect to all of the remaining points in the training dataset.

\begin{definition}[Adaptive Shapley Requester]
    The order of deletions requested by the adaptive Shapley requester $u_1,\ldots,u_k$ is captured by the following recurrence relation:
    \begin{align*}
        u_1 &= \argmax_{j\in [n]} \phi_j(D,\fn)\\
        u_t &= \argmax_{j \in [n+1-t]} \phi_j(D^t,\fn),
    \end{align*}
    where $D^t = D \setminus \{x_1,\ldots,x_{t-1}\}$.
\end{definition}

Next, we define the \emph{individual Shapley requester}. 
The individual Shapley requester computes Shapley values with respect to a dataset sampled independently but from the same distribution as $D$. 

\begin{definition}[Individual Shapley Requester]
    Let $D'$ be a dataset drawn from the same distribution as $D$. For each index $j \in [n]$, the individual Shapley requester computes $\phi_j(D' \cup \{(x_j,y_j)\},\fn)$ and returns the samples in $D$ ordered by decreasing values of $\phi_i(D' \cup \{(x_j,y_j)\},\fn)$.
\end{definition}

The individual Shapley requester is inherently \emph{nonadaptive}: the computation of $\phi_j(D'\cup \{x_j,y_j\},\fn)$ depends neither on prior unlearning requests nor the state of the model. This formulation is similar in principle to the distributional Shapley value introduced by \citet{ghorbani2020distributional}.  
We further explain our design of the individual Shapley requester in Appendix~\ref{app:ind_shap}. 

In this work, we will generally take $\fn(S)$ to be the test accuracy of a model trained from scratch on $S$. With this choice of $\fn$, the goal of the Shapley requester and its variants can be thought of as greedily maximizing the expected decrease in test accuracy after each deletion. Observe how this corresponds to the requester's utility when the performance metric $g$ is chosen to be test error. Ordering deletion requests by decreasing Shapley value can thus be seen as a greedy heuristic for assembling a set of training samples which minimizes test accuracy after being unlearned.

We note, however, that the optimization problem that the Shapley requester is solving is distinct from that of an optimal unlearning requesters. In Appendix~\ref{app:shap_subopt}, we prove that there are situations in which the Shapley requester is suboptimal.
\begin{claim}\label{claim:shap_subopt}
    The nonadaptive Shapley requester is not an optimal deletion requester in the learning setting defined in Section~\ref{sec:learning}.
\end{claim}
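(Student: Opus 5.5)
The proof is by explicit counterexample. I will exhibit one instance of the learning problem from Section~\ref{sec:learning}, with $\alg^*$ from Claim~\ref{claim:opt-model}, $\RA=\retrain_{\alg}$ and $g=\err$, in which the nonadaptive Shapley requester $\shap$ gets strictly lower expected utility than the optimal collective requester. That optimum solves the knapsack instance with weights $w_i=n_i$ and values $v_i=p_i$. I take $\fn(S)$ to be the accuracy of $\alg^*(S)$, which here is $1-\frac12\sum_i p_i\mathbbm{1}[i\notin S]$.

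The first step is to compute the Shapley values in closed form. Adding a copy of subpopulation $i$ to $S$ changes $\fn$ only when $S$ contains no other copy of $i$, and then the change is $p_i/2$. So $\fn$ is a sum over subpopulations of $\frac{p_i}{2}\mathbbm{1}[S\cap\{\text{copies of } i\}\neq\emptyset]$. By linearity and symmetry of the Shapley value, each of the $n_i$ copies of $i$ receives
\[\phi = \frac{p_i}{2n_i},\]
since the $n_i$ copies are interchangeable and share the total value $p_i/2$. Hence $\shap$ deletes points in decreasing order of value density $p_i/n_i$, with random tie-breaking. This means copies of a subpopulation are deleted contiguously, and subpopulations are processed in the greedy knapsack order.

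The second step uses the fact that greedy-by-density is suboptimal for $0/1$ knapsack. I take two subpopulations, with a deterministic-enough realization that occurs with positive probability. For example, suppose $n_1=1$ with $p_1$ small but $p_1/n_1$ the largest density, and $n_2=k$ with $p_2$ large. Greedy deletes the single copy of $1$ and then spends the remaining $k-1$ requests on $2$ without finishing it, so it gains only $p_1/2$. The optimum deletes all of $2$ and gains $p_2/2$. Concretely, take $\lambda_1$ and $\lambda_2$ so that $p_1=0.3$, $p_2=0.7$, and $k=n_2=2$ with $n_1=1$: then $p_1/1=0.3>0.35=p_2/2$ fails. I will therefore choose numbers such as $k=3$, $n_2=3$, $p_2=0.6$, $p_1=0.4$, $n_1=1$, giving densities $0.4>0.2$. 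On this event the optimal requester strictly beats $\shap$.

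The third step converts this pointwise gap into a gap in expectation. On every dataset, $\shap$'s utility is at most the knapsack optimum, because the optimal collective requester achieves that optimum pointwise. The event above has positive probability, since Poisson counts put mass on every nonnegative integer. Therefore $\overline{\util}_{\shap,\err}(\D,k)<\opt_{\err}(\D,k)$.

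I expect the main obstacle to be the first step: computing the Shapley values for multisets with duplicate points, and confirming that the symmetry argument gives $p_i/(2n_i)$ exactly. I would verify this by directly applying the permutation form of Definition~\ref{def:shap}: copy $j$ of $i$ contributes only when it is the first copy of $i$ in the permutation, which happens with probability $1/n_i$. Random tie-breaking is harmless, because ties occur only among copies of the same subpopulation in the chosen instance.
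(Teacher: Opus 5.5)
Your argument is correct, and it reaches the conclusion by a different route from the paper's.

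The paper works with one tiny dataset: $p_a=\frac13$, $p_b=\frac23$, one copy of $a$, two copies of $b$, and $k=1$. It enumerates subsets to get $\phi_{j_a}=p_a/2=\frac16=p_b/4=\phi_{j_b}$. It then uses the resulting three-way tie: random tie-breaking deletes $a$ only with probability $\frac13$, so a requester $\shap'$ that agrees with $\shap$ except on $D^{\mathsf{ref}}$ does strictly better.

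You instead derive the general formula $\phi=p_i/(2n_i)$, of which the paper's two values are special cases. This identifies $\shap$ with the greedy-by-value-density heuristic for the induced knapsack instance. You then use the standard instance where greedy wastes capacity ($n_1=1$, $n_2=k=3$, $p_1=0.4$, $p_2=0.6$), giving realized utility $0.2$ against an optimum of $0.3$.

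What your route buys is robustness and explanation. Your densities $0.4\neq 0.2$ create no ties across subpopulations, so the suboptimality holds under any tie-breaking rule. The paper's argument, by contrast, depends entirely on the random tie-breaking in the definition of $\shap$; it would vanish if ties favored the lone copy of $a$. Your formula also describes the Shapley requester on every dataset in this setting, so the failure mode (greedy versus $0/1$ knapsack) is transparent. The paper's route buys a fully concrete, enumeration-level computation that needs no appeal to Shapley axioms.

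Two small cleanups:
\begin{itemize}
\item Delete the abandoned $p_1=0.3$ attempt.
\item Justify ``$\shap$'s utility is at most the knapsack optimum'' by noting that any requester's realized utility is $\frac12\sum_i p_i\mathbbm{1}[U_i=D_i>0]$. This is the value of a knapsack-feasible set, with zero value assigned to absent subpopulations. The fact that the collective requester attains the optimum is the justification for the other inequality, $\opt_{\err}(\D,k)\ge\expect[\text{knapsack optimum}]$, not for this one.
\end{itemize}
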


\subsection{Other Deletion Requesters}
As points of comparison, we consider three other requesters which represent different motivations that users may hold when deleting their data.

\paragraph{The Loss Requester} An individual may simply want a model to not produce correct information about them. In this case, the individual would be more likely to request deletion if the model confidently classifies their data correctly, indicated by low cross-entropy loss. Therefore, the \emph{loss requester} issues requests ordered by decreasing loss (denoted by $\ell(\cdot,\cdot)$).
\begin{definition}[Loss Requester]
    The \emph{nonadaptive loss requester} computes the loss $\ell(\theta^0(x),y)$ for each point $(x,y) \in D$ and returns the samples in $D$ ordered by increasing values of loss.
    \noindent The \emph{adaptive loss requester} issues deletions requests 
    $u_t = \argmax_{(x,y) \in D^j} \ell(\theta^{j-1}(x),y).$
\end{definition}

\paragraph{The MIA Requester} An individual may be motivated by privacy concerns presented by a model more broadly. Rather than their data being regurgitated, this individual is worried that their data has any sort of measurable influence on the model in such a way that might be detected by an attacker. This is the type of privacy leakage that is typically measured using membership inference attacks (MIAs).
Therefore, the \emph{MIA requester} runs a membership inference attack on each training sample and issues requests by decreasing MIA confidence. 
\begin{definition}[MIA Requester]
    Let $\mathtt{MIA}:\Theta \times(\X \times \Y) \to \reals$ be a \emph{membership inference attack} mapping a model and a data sample to a real number.
    The \emph{nonadaptive MIA requester} computes the value $\mathtt{MIA}(\theta^0,(x,y))$ for each point $(x,y) \in D$ and returns the samples in $D$ ordered by increasing values of MIA values.
    \noindent The \emph{adaptive MIA requester} issues deletions requests 
    $u_t = \argmax_{(x,y) \in D^j} \mathtt{MIA}(\theta^{j-1},(x,y)).$
\end{definition}
This requester also reflects the order in which  points were removed in the experiments of \citet{carlini2022privacy}, which highlighted unintended impacts of unlearning sensitive datapoints.

\paragraph{The Random Requester} As a baseline, we implement a requester which issues deletion requests in a random order. This reflects the evaluation methods used for many proposed machine unlearning methods, which sample forget sets uniformly at random from a model's training set. This requester thus reflects the most common approach for evaluating unlearning algorithms and will serve as the baseline in our exploration of requester utility.
\begin{definition}[Random Requester]
    The random requester $\rand$ returns $u_1,\ldots,u_k$ sampled uniformly at random from $D$ without replacement.
\end{definition}

We note that all three of these requesters are \emph{individual}. Neither computing the cross-entropy loss for a datapoint nor computing a LiRA score require knowledge of the particular samples in the training dataset beyond the point being tested.

\subsection{Experimental Setup}
\label{subsec:setup}
\paragraph{Evaluation methods} As mentioned in Section~\ref{sec:shapley}, a common evaluation technique for data valuation methods is to sort points in the training dataset in decreasing order by valuation and, for some subset of $k \leq |D|$, computing the performance of a model trained on the $|D|-k$ least valuable datapoints. These are plotted as series for different valuation metrics to allow for comparisons between metrics \citep{ghorbani2019data,ghorbani2020distributional,yan2021if}. Instead of sorting by valuation according to a metric, we sort training samples according to the order in which a requester requests their deletion. This applies to both nonadaptive requesters, where the request order is fixed at the beginning of the interaction, and adaptive requesters, which recompute orderings in response to each model update. The choice of unlearning algorithm can also be varied. In the valuation literature, retraining from scratch on the $|D|-k$ least valuable points is evaluated, while we can alter this to use any algorithm to unlearn the first $k$ points that are requested.

We call these visualizations in this context \emph{unlearning utility plots}. In our experiments, we typically run 10 rounds of unlearning, requesting points in batches of 1,000. We report average accuracies over 4 trials and give error bars representing one standard deviation above and below the mean.

\paragraph{Datasets and Architectures} All of our experiments make use of the ResNet18 architecture trained on the CIFAR-10 dataset. Rather than train on the full set of 50,000 training samples, we train our initial models on a randomly sampled 50\% subset of CIFAR-10's training set. We use the held-out training samples for the computation of Shapley values in the individual Shapley Requester.

\paragraph{Requesters}
As described above, we run experiments on the Shapley requester, the loss requester, the MIA requester, and the random requester. Additionallly, we run experiments on the adaptive and individual variants of the Shapley requester, as well as the adaptive variants of the loss and MIA requesters.

Since it is computationally infeasible to compute exact Shapley values for a dataset as large as CIFAR-10, we make use of a method for computing approximate Shapley values. In particular, we implement a $k$-nearest neighbors based approach for computing Shapley values first proposed by \citet{jia2019efficient}. This involves training a KNN model on the activations induced by each training sample at the final fully connected layer of the network (i.e. the input to the final softmax layer). We then compute exact Shapley values for the KNN model, which can be computed efficiently using dynamic programming. In our experiments, we set $k = 10$, which we found yielded KNN classifiers with similar performance to the ResNet18 networks being approximated.

In our implementation of the MIA requester, we adopt LiRA \citep{carlini2022membership} as a membership inference attack. To compute LiRA scores, we train 500 shadow models, each of which is trained on a random subset of 25,000 CIFAR-10 training samples. 

\paragraph{Unlearning Methods} 
We make use of three unlearning methods which have been proposed in the literature: Retraining, Relabeling, and SCRUB. Retraining is an exact unlearning method, while relabeling and scrub are approximate unlearning methods. We do not claim that these algorithms necessarily represent the state of the art as it currently stands. Instead, we use them as examples to demonstrate how the utility of an unlearning requester can change depending on the choice of unlearning algorithm

\emph{Retraining} serves as the ``gold standard'' baseline for machine unlearning and simply involves training a new model from scratch on the set of points being retained.

\emph{Relabeling} \citep{golatkar2020eternal} fine-tunes the model on the original training set, except the set of points being forgotten are relabeled with random labels.

\emph{SCRUB} \citep{kurmanji2023towards} uses a student-teacher knowledge distillation approach. The unlearned model (student) is trained in such a way that it behaves similarly to the original model (teacher) but is uncorrelated with the original model on the forget set.

\subsection{Results}
\subsubsection{Comparing Deletion Requesters}
\begin{figure}[ht]
    \centering
    \includegraphics[width=\linewidth]{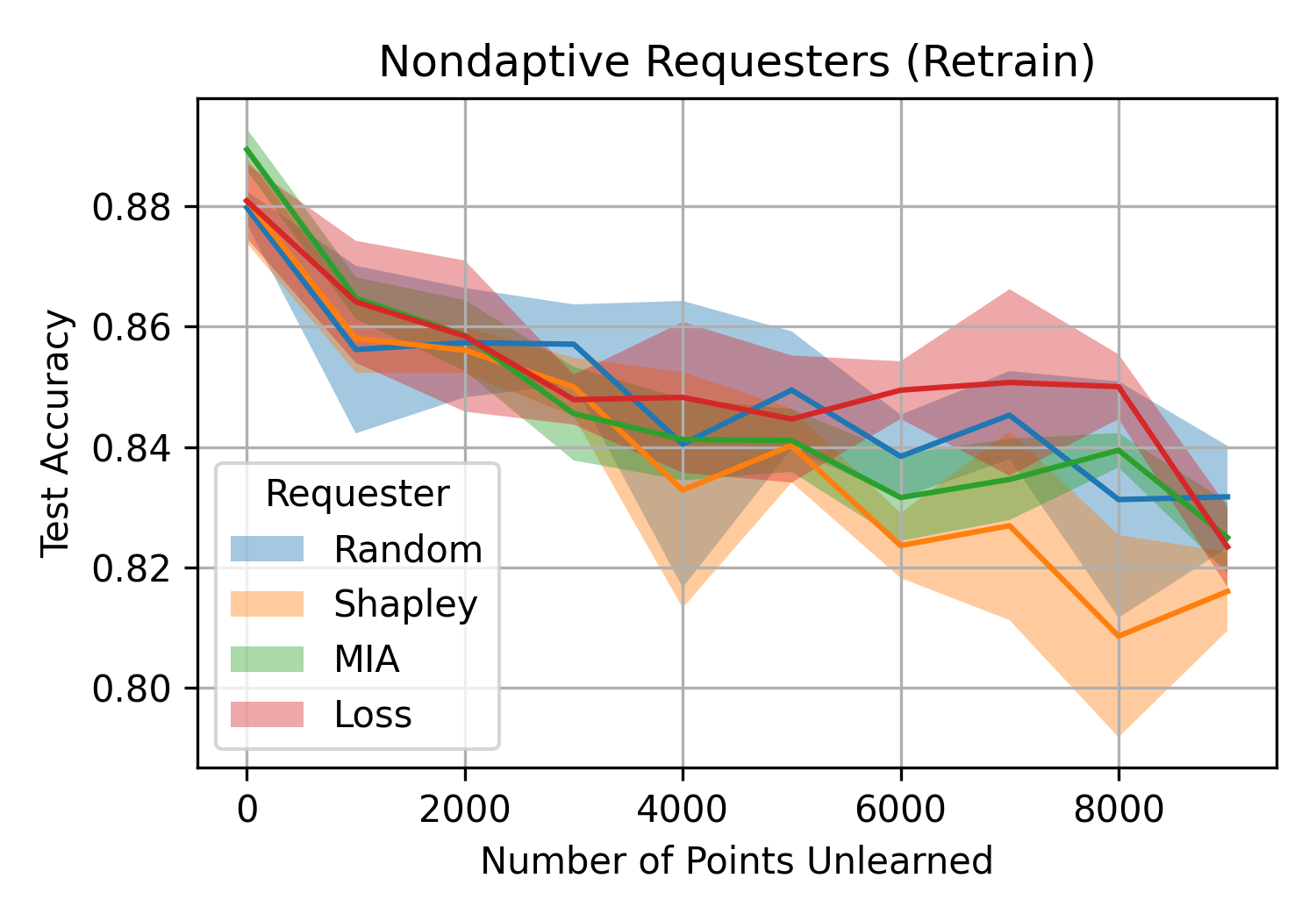}
    \caption[Nonadaptive Requester Unlearning Performance.]{\textbf{Nonadaptive Requester Unlearning Performance.} We plot the performance of the loss, MIA, and Shapley requesters compared to the random requester. For each requester, we train an initial model on a random subset of 25,000 CIFAR-10 training points and use that model to compute an ordering in which the training sampled will be unlearned. Points are then unlearned in this order over 9 iterations of 1,000 points per iteration. Solid lines represent averages over 4 trials, and one standard deviation is filled above and below the mean.}
    \label{fig:nonadaptive}
\end{figure}
We begin by analyzing the nonadaptive variants of the deletion requesters described above (random, loss, MIA, and Shapley). An unlearning utility plot is shown in Figure \ref{fig:nonadaptive} using retraining as the unlearning method. We observe several trends. Perhaps unsurprisingly, all requesters degrade the performance of the model. After the removal of 9,000 data points (36\% of the original training dataset), all methods on average reduce test accuracy to below 84\%, a reduction of over 4 percentage points below the original model. While all of the requesters are relatively close in performance, the Shapley performs the best in the end, achieving a final test accuracy of 81.6\%. For comparison, the random requester achieves a final test accuracy of 83.2\%. This is directionally in line with results from the data valuation literature \cite{ghorbani2019data}, although those results would suggest a larger difference between the Shapley and random requesters. We believe this discrepancy is due to our use of an approximation of the Shapley value (as described in \ref{subsec:setup}).

\subsubsection{Disentangling Adaptivity and Collectivity}
\begin{figure}[ht]
    \centering
    \includegraphics[width=\linewidth]{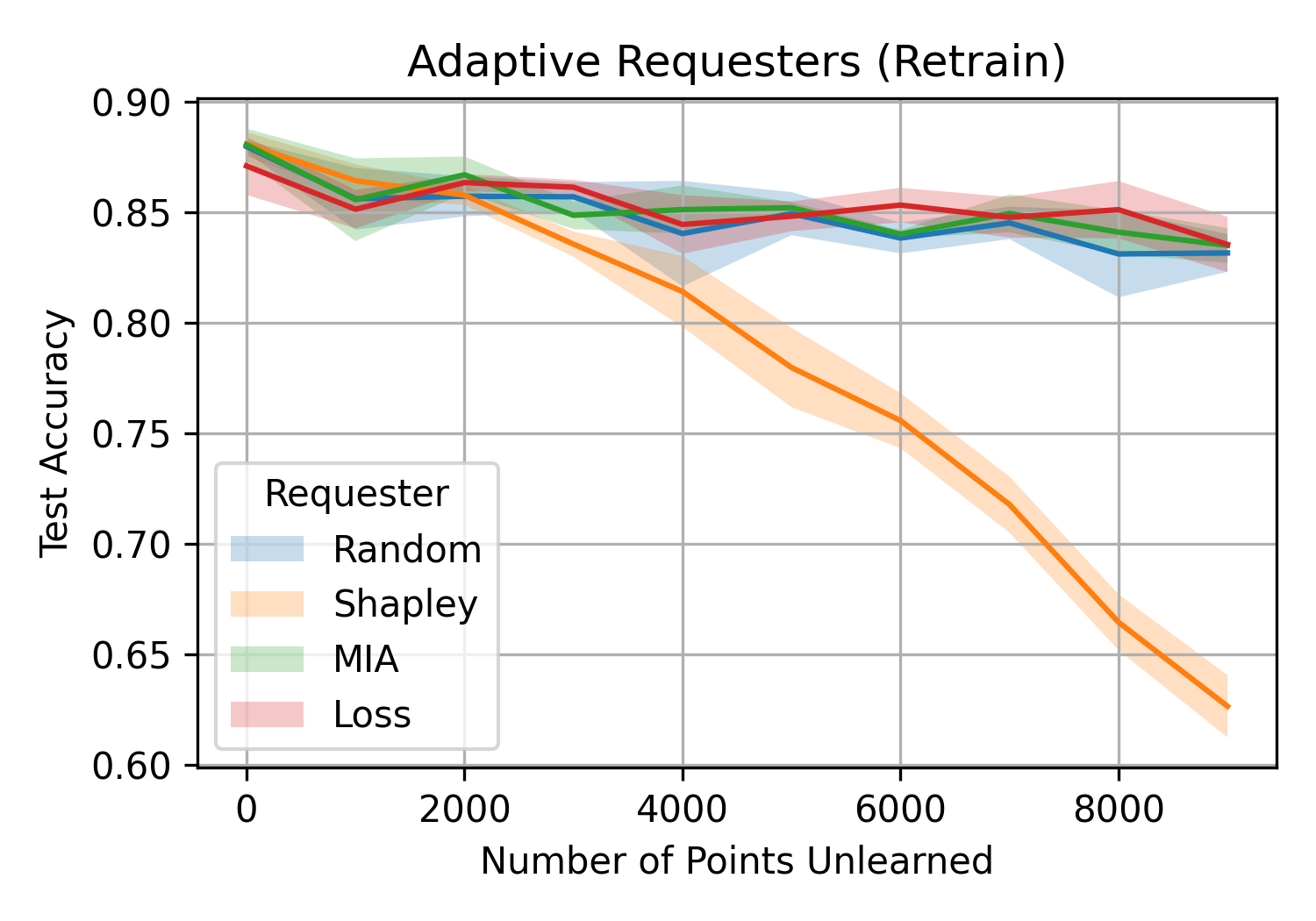}
    \caption[Adaptive Requester Unlearning Performance.]{\textbf{Adaptive Requester Unlearning Performance.} We plot the performance of the adaptive variants of the loss, MIA, and Shapley requesters compared to the random requester.}
    \label{fig:adaptive}
\end{figure}
We now measure the performance of the adaptive and individual variants of the Shapley requester. Results are presented in Figure \ref{fig:adaptive}. 
We observe that  the adaptive loss and MIA requesters perform nearly identically to their nonadaptive variants. However, the adaptive Shapley requester is much more powerful than its nonadaptive variant, reducing test accuracy to 62.7\% on average. 
We hypothesize that the Shapley requester benefits from adaptivity because it attempts to rank points by how important they are to the accuracy of the model, which is exactly the metric that we are measuring here.  

\begin{figure}[ht]
    \centering
    \includegraphics[width=\linewidth]{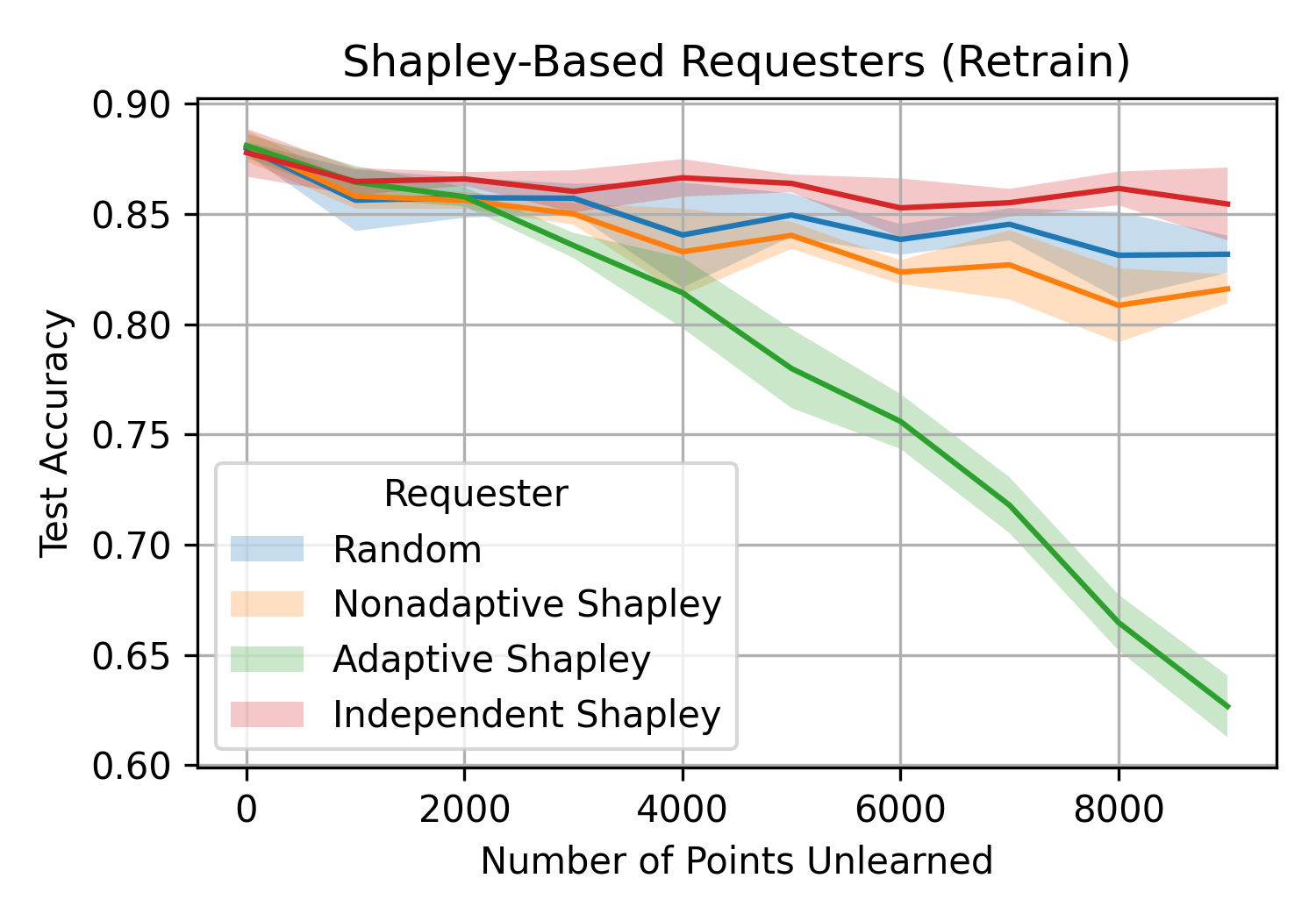}
    \caption[Shapley-Based Requester Unlearning Performance.]{\textbf{Shapley-Based Requester Unlearning Performance.} We plot the performance of variants of the Shapley requester (nonadaptive, adaptive, and individual) compared to the random requester.}
    \label{fig:shap}
\end{figure}
Additionally, we plot the results for the variants of the Shapley requester in Figure~\ref{fig:shap}. Unsurprisingly, the individual Shapley requester is less powerful than both the adaptive and nonadaptive variants of the (collective) Shapley requester. Oddly, the individual Shapley requester also has less utility than the random requester. This is likely due to the high-variance estimator of the distributional Shapley value that we utilize. Results from \citet{ghorbani2020distributional} suggest that the individual Shapley requester should in principle be able to outperform the random requester in at least some settings.

\subsubsection{Comparing Unlearning Algorithms}
\begin{figure}[ht]
    \centering
    \includegraphics[width=\linewidth]{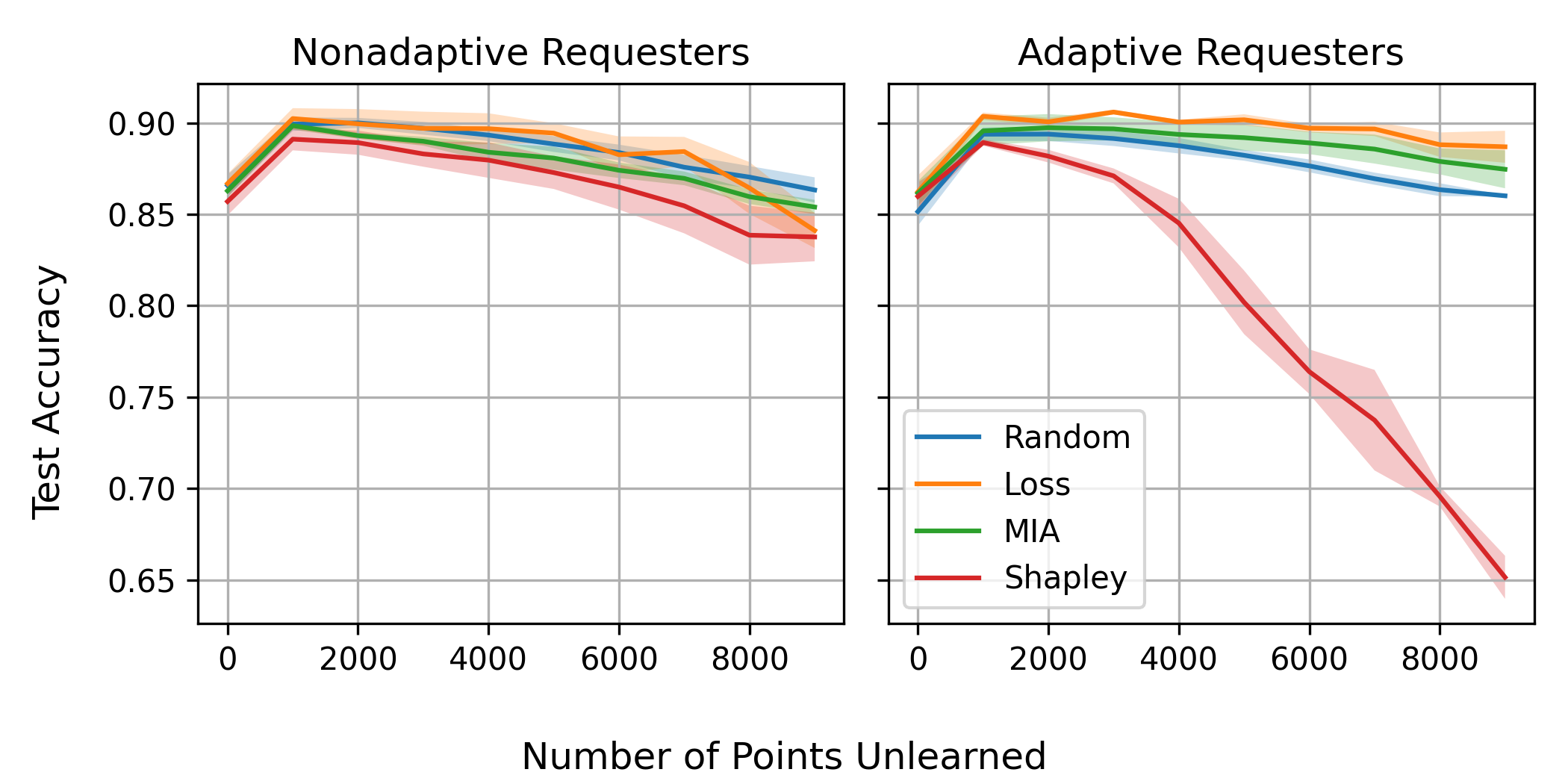}
    \caption[Relabel Unlearning Utility.]{\textbf{Relabel Unlearning Utility.} We plot the performance of nonadaptive and adaptive unlearning requesters compared to the random requester when the unlearning algorithm is Relabel.}
    \label{fig:relabel}
\end{figure}

\begin{figure}[ht]
    \centering
    \includegraphics[width=\linewidth]{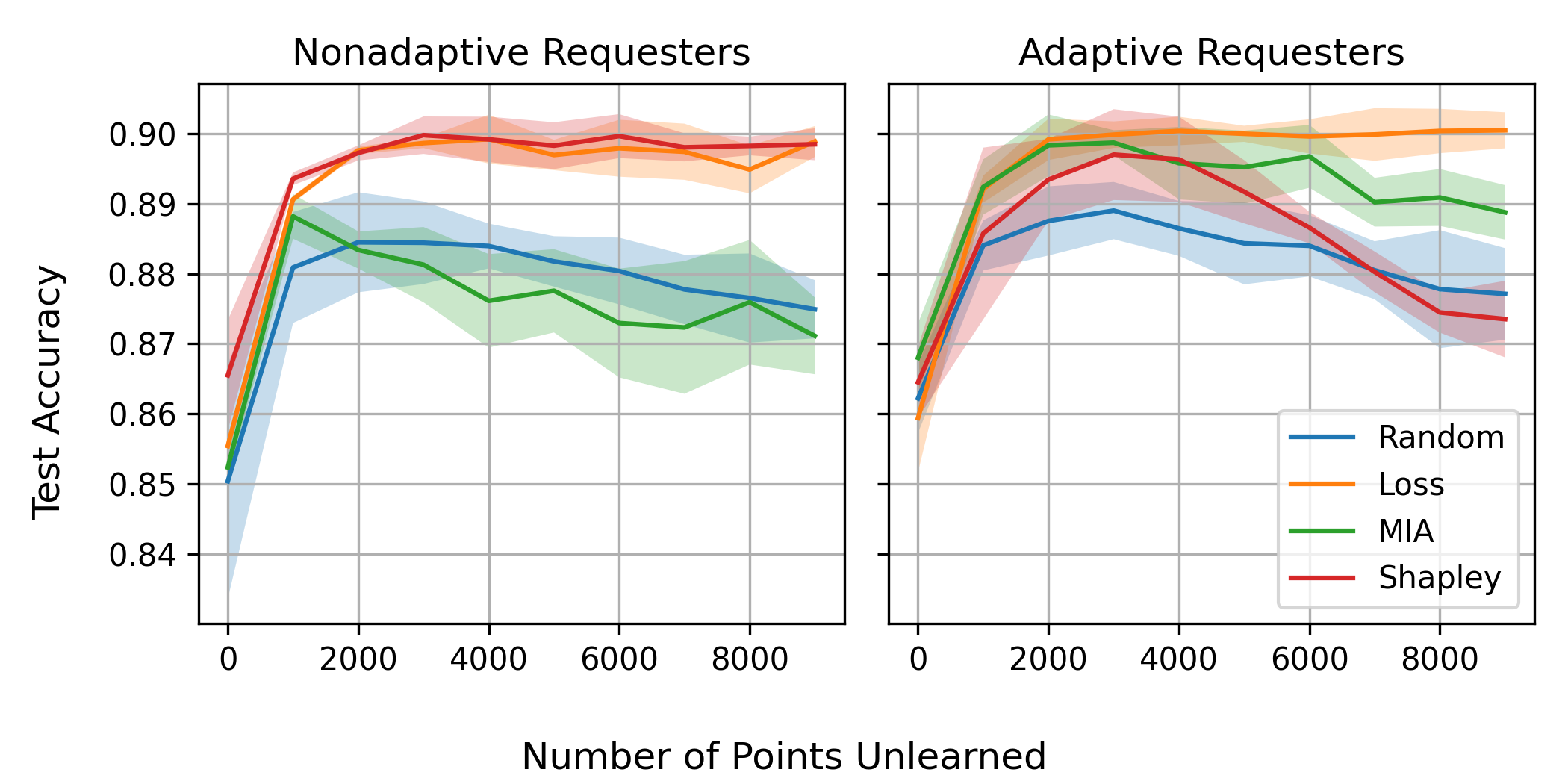}
    \caption[SCRUB Unlearning Utility.]{\textbf{SCRUB Unlearning Utility.} We plot the performance of nonadaptive and adaptive unlearning requesters compared to the random requester when the unlearning algorithm is SCRUB.}
    \label{fig:scrub}
\end{figure}

We recreate the plots in Figures \ref{fig:nonadaptive} and \ref{fig:adaptive} (which used Retrain as an unlearning method) using both Relabel (Figure \ref{fig:relabel}) and SCRUB (Figure \ref{fig:scrub}). We observe that for all requesters interacting with either of these approximate methods, the accuracy of the model increases after the first round of unlearning. We believe this is because both of these methods involve fine-tuning the weights of the original model, resulting in a model that has been trained for longer, and only after the first step have enough points been removed to for accuracy to start decreasing again.

Looking at the results for Relabel, patterns look broadly similar to those of Retrain. In particular, the nonadaptive Shapley requester shows slightly higher utility than the other requesters, and the adaptive Shapley requester displays significantly higher utility. Interestingly, the adaptive Loss and MIA requesters appear to have lower utility than the random requester, a pattern that is not clearly visible in their nonadaptive variants or when using Retrain.

SCRUB yields significantly different results. After the initial rise in accuracy, the accuracy of the unlearned models tends to decrease only marginally or not at all. In fact, the nonadaptive Shapley requester does not appear to reduce the accuracy of the model after 9,000 deletions. We do observe an gap between the adaptive and nonadaptive Shapley requester after $\sim$3,000 points are unlearned, but the effects are much more muted than with Retrain or Relabel. 

These plots illustrate how the utility of a requester is dependent on the unlearning method being used. Our results indicate that some approximate unlearning algorithms (like SCRUB) may be more resistant to the effects of requester adaptivity than others (like Relabel). More work is necessary to better understand this phenomenon.

\section{Limitations and Future Work}
There are inherent limitations in the scope of our theoretical results. These results only hold for the very simple learning task that we present; analyzing more complex settings may yield tighter bounds on the true adaptivity and collectivity gaps. We also define adaptivity and collectivity as binary attributes, while in reality they may be varying degrees of these properties that requests can display. Future work might focus on settings where small groups may act collectively, or where requests can only be dependent on recent model updates. Other limitations lie in our experimental setup. We focus only on one architecture and dataset, and our implementations of the variants of the Shapley requester rely on approximations that may obscure some of the true effects of unlearning. Our results should therefore be viewed as a proof of concept rather than exhaustive. Additional empirical work is necessary to better understand the impacts of different properties of requesters in realistic settings.

Another significant limitation of our framework is that we don't account for mitigation strategies that can be used by an unlearning algorithm to lessen the impacts of deletion requesters. For example, the algorithm could attempt to sample new data from the same distribution as the deleted data that can be used for fine-tuning the model after unlearning. It is not immediately clear from our results whether the effects of request distribution would be difficult to account for. One approach might take inspiration from concepts in performative prediction \citep{perdomo2020performative}. For example, one might attempt to train a model that induces distributions of requests from certain requesters that are close to random.

\section{Open Science}
Replication code is available at 
\url{https://github.com/chriscianfarani/ModelingDeletionRequests}.

\section{Ethical Considerations}
Much of this work studies deletion requesters acting in an adversarial manner. When viewed in this light, one might fear real harms being caused by coordinated deletion requests. These harms may take the form of increased algorithmic bias, unfair decisions, or abusive behavior from models, among others. However, we note that our theoretical framework was designed to be intentionally broad in order to capture a wide range of possible requester behaviors. Our focus on worst-case behaviors is intended to help us understand the limitations of different types of requesters. We also note that coordinated deletion requests may be used to remove harmful behaviors from models in a similar manner to the approaches proposed by \citet{hardt2023algorithmic}.

\bibliographystyle{IEEEtranN}
\bibliography{IEEEabrv,references}

\appendix 
\subsection{Existing Bounds on the Adaptivity Gap in Stochastic Knapsack}\label{app:prior_gaps}
\citet{gupta2011approximation} give an upper bound for the adaptivity gap for correlated stochastic knapsack, constructing a nonadaptive algorithm which is an 16-approximation of the optimal adaptive algorithm. 
Their formulation of the stochastic knapsack problem allows for \emph{cancellations}, which would correspond to giving up on the deletion of a subpopulation before it is fully removed and never returning to it. However, our formulation more broadly allows for \emph{preemptions}, where a requester might stop deleting a certain subpopulation but return to it later. \citet[Appendix B]{ma2018improvements} describe a setting in which preemption is necessary for the optimal solution in uncorrelated stochastic knapsack. However, they remark that the gap between algorithms that can preempt and algorithms that can only cancel appears to be bounded by a constant factor. If this is the case, then the algorithm given by \citet{gupta2011approximation} would also be a constant factor approximation of the optimal preempting policy for correlated stochastic knapsack and would serve as an upper bound for the individual adaptivity gap in our setting.

\citet{dean2008approximating} give a lower bound  of 1.25 for the adaptivity gap for the standard stochastic knapsack, but this result does not apply to our instantiation of the correlated stochastic knapsack problem.
\citet{dean2008approximating} arrive at this lower bound by formulating an instance of stochastic knapsack where the item weights are distributed according to carefully crafted distributions (which are not Poisson) and showing that the expected value of the optimal adaptive algorithm is at least 1.25 times greater than that of the optimal nonadaptive algorithm.
It does not follow from this result that the optimal adaptive algorithm has a higher expected value than the optimal nonadaptive algorithm in the setting where item weights are distributed as Poissons, as they are in our instantiation of the problem. \citet{barak2026stochastic} improve the lower bound on the adaptivity gap for stochastic knapsack to 1.37, but they also make use of carefully crafted non-Poisson item weight distributions.

\subsection{Proof of Claim~\ref{claim:opt-model}}\label{app:claim}
\begin{proof}
    In order for $\alg^*$ to be optimal, it must minimize the following objective:
    \[\inf_\alg \expect_{D \sim \D}\err(\alg(D)).\]
    Expanding this expression, we get
    \begin{align*}
        \inf_\alg \expect_{\substack{D \sim \D\\\theta \leftarrow \alg(D)}}\expect_{(x,y)\sim\D}\mathbbm{1}[\theta(x) \neq y]
    \end{align*}
    We can decompose the inner expectation into error on points that are in $D$ and error on points that are not in $D$.
    \ifsubmission
    \begin{align*}
        \expect_{\substack{D \sim \D\\\theta \leftarrow \alg(D)}}&\expect_{(x,y)\sim\D}\mathbbm{1}[\theta(x) \neq y]\\
        &=\expect_{\substack{D \sim \D\\\theta \leftarrow \alg(D)}}\expect_{(x,y)\sim\D}\Big[\mathbbm{1}[(x,y) \in D]\cdot\mathbbm{1}[\theta(x) \neq y]\\ 
        &\qquad+ \mathbbm{1}[(x,y) \notin D]\cdot\mathbbm{1}[\theta(x) \neq y]\Big]\\
        &=\expect_{\substack{D \sim \D\\\theta \leftarrow \alg(D)}}\Big[\expect_{(x,y)\sim\D}\mathbbm{1}[\theta(x) \neq y \land (x,y) \in D] \\
        &\qquad+ \expect_{(x,y)\sim\D}\mathbbm{1}[\theta(x) \neq y\land(x,y) \notin D]\Big]\\
        &=\expect_{\substack{D \sim \D\\\theta \leftarrow \alg(D)}}\Big[\Pr_{(x,y)\sim\D}[\theta(x) \neq y \land (x,y) \in D]\\ 
        &\qquad+ \Pr_{(x,y)\sim\D}[\theta(x) \neq y\land(x,y) \notin D]\Big].
    \end{align*}
    \else
    \begin{align*}
        \expect_{\substack{D \sim \D\\\theta \leftarrow \alg(D)}}&\expect_{(x,y)\sim\D}\mathbbm{1}[\theta(x) \neq y]\\
        &=\expect_{\substack{D \sim \D\\\theta \leftarrow \alg(D)}}\expect_{(x,y)\sim\D}\Big[\mathbbm{1}[(x,y) \in D]\cdot\mathbbm{1}[\theta(x) \neq y] + \mathbbm{1}[(x,y) \notin D]\cdot\mathbbm{1}[\theta(x) \neq y]\Big]\\
        &=\expect_{\substack{D \sim \D\\\theta \leftarrow \alg(D)}}\expect_{(x,y)\sim\D}\mathbbm{1}[\theta(x) \neq y \land (x,y) \in D] + \expect_{(x,y)\sim\D}\mathbbm{1}[\theta(x) \neq y\land(x,y) \notin D]\\
        &=\expect_{\substack{D \sim \D\\\theta \leftarrow \alg(D)}}\Pr_{(x,y)\sim\D}[\theta(x) \neq y \land (x,y) \in D] + \Pr_{(x,y)\sim\D}[\theta(x) \neq y\land(x,y) \notin D].
    \end{align*}
    \fi
    The left inner probability in the last line is minimized by $\alg^*$. Letting $\theta^* = \alg^*(D)$ $\theta^*(x) = y$ for all $(x,y) \in D$, so this probability will be 0. By Bayes' rule, the right inner probability is equivalent to
    \[\Pr_{(x,y)\sim\D}[\theta(x) \neq y\mid(x,y) \notin D]\cdot\Pr[(x,y) \notin D].\]
    $\Pr[(x,y) \notin D]$ is independent of $\alg$, since $\alg$ and $\D$ are fixed before the dataset is sampled. 
    Let $f$ be the labeling function used to label the points in $D$. 
    Now, consider a point $x \in \X$ such that $(x,f(x)) \notin D$. 
    Since the labeling function for $D$ was sampled from the uniform distribution over all binary labeling functions, its label is independent of the label of any subpopulation in $D$. This implies that $\theta = \alg(D)$ is also independent of $f(x)$ for all learning algorithms $\alg$ that are independent of $f$ conditioned on the training dataset $D$. Therefore, the right probability above is a constant which is minimized by any learning algorithm, so $\alg^*$ minimizes the objective.
\end{proof}

\subsection{Randomized Individual Requesters}
\label{sec:ind_requesters}
We note that while an individual requester does not a priori know the exact multiplicity of each subpopulation in the dataset, it is able to sequentially request the deletion of complete subpopulations. 
In fact, since we require scoring functions be deterministic and disallow ties between subpopulations, nonadaptive individual requesters are only capable of removing whole subpopulations sequentially. On the other hand, adaptive individual requesters, being able to act on incremental model updates, are able to \emph{preempt} the deletion of a given subpopulation before it is fully removed.
Similarly, an individual requester can determine when a complete subpopulation has been removed by observing the point at which the accuracy of the model decreases. (This is where the choice described in footnote~\ref{footnote:guessing} comes into play.)

Ideally, we would be able to model requesters characterized by randomized scoring functions. 
Under the following conjecture, which states that there exists an optimal randomized individual requester of a simple form, the problem of finding an optimal individual requester would also reduce to correlated stochastic knapsack as above.
\begin{conjecture}
    An optimal randomized individual deletion requester (adaptive or nonadaptive) will never \emph{preempt}
    a subpopulation. That is to say, such a requester will never begin removing 
    samples from a certain population and switch to removing samples from a 
    different subpopulation before the first subpopulation was completely 
    removed.
\end{conjecture}
In fact, we prove a weaker version of this conjecture in Theorem~\ref{thm:adaptive_collective} which applies to a setting with only two subpopulations. We hypothesize that this conjecture is true for both adaptive and nonadaptive requesters in all settings where $n_i$ is sampled from a Poisson distribution, as is the case here. When a copy of a subpopulation is deleted, the expected number of copies off that subpopulation that remain in the training data will strictly decrease, while the utility gained by removing that subpopulation in its entirety will remain the same. In that sense, the \emph{expected utility per deletion} (the ratio of utility gained by removing a complete subpopulation to number of remaining elements of that subpopulation) will increase for a given subpopulation after one of its elements is deleted and remain the same for all other subpopulations. It would make sense that a requester would not preempt a subpopulation whose expected utility per deletion is increasing in this way. However, this analysis overlooks the requester's total budget of $k$ deletions, which may induce non-monotonicity in expected utility per deletion for a given subpopulation. In our attempts, we have failed to either prove this conjecture or find a counterexample. \citet{ma2018improvements} demonstrates that there are instances of the stochastic knapsack problem where preemption is necessary to obtain the optimal value. However, like the prior lower bounds on the adaptivity gap of stochastic knapsack, this result requires that item sizes be sampled from carefully crafted non-Poisson distributions.

\subsection{Adaptivity Gap Visualizations (Individual)}\label{app:viz}
Figures \ref{fig:heatmap} and \ref{fig:square_heatmap} visualize our grid search to find the maximal adaptivity gap in a three subpopulation setting. Figure \ref{fig:heatmap} shows a search over values of $p_1$, $p_2$, and $p_3$, representing subpopulation frequencies. Figure \ref{fig:square_heatmap} fixes $p_1 = p_2 = \epsilon$ and $p_3 = 1 - 2\epsilon$, and searches over values of $\epsilon$ and $N$.
\begin{figure}[ht]
    \centering
    \includegraphics[width=\linewidth]{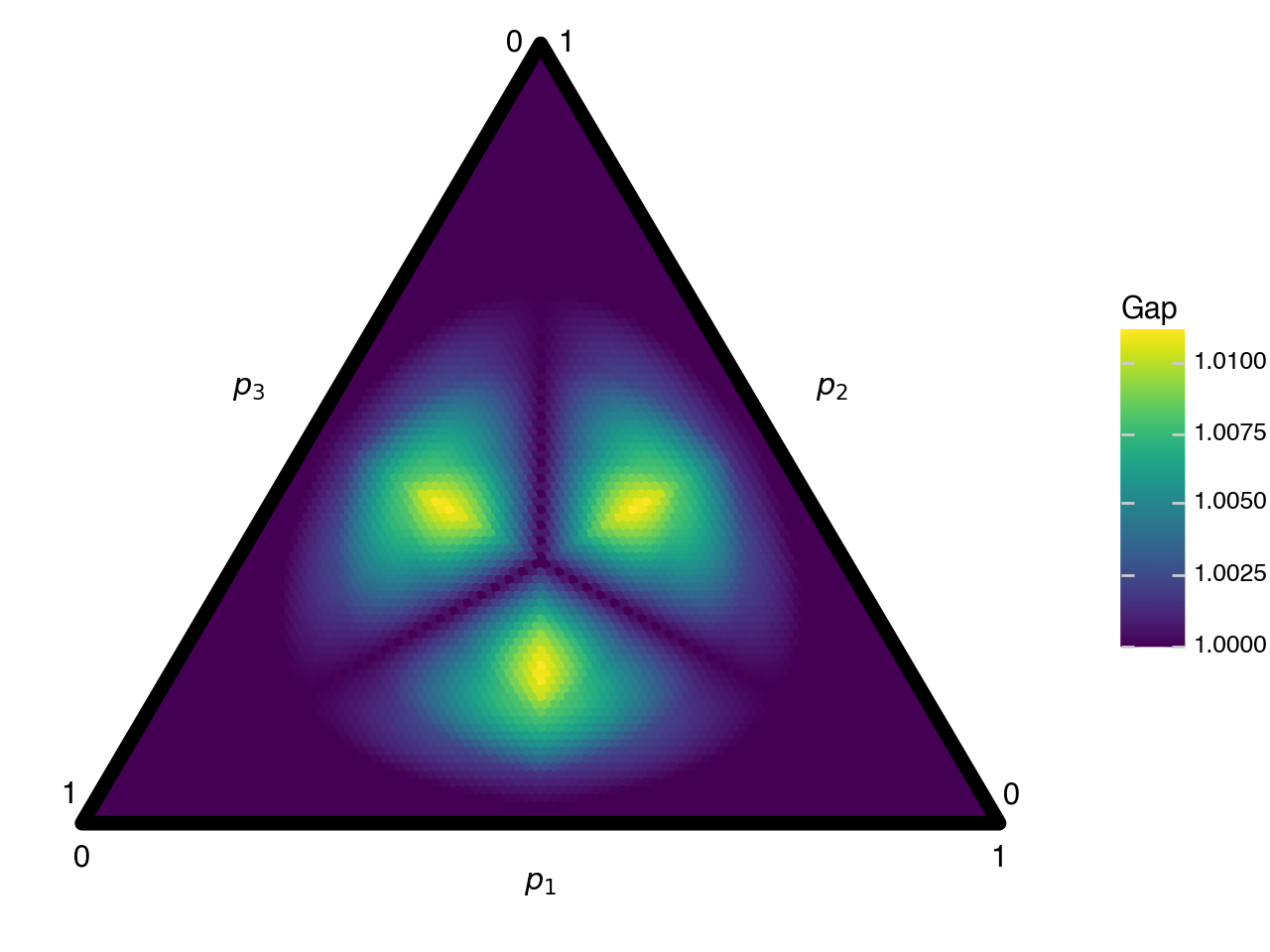}
    \caption[Thresholded Adaptivity Gaps.]{\textbf{Thresholded Adaptivity Gaps.} A ternary plot of the thresholded adaptivity gap for three subpopulations plotted over values of $(p_1,p_2,p_3)$. In this plot, $N$ is fixed to 10 and $k$ is fixed to 8.}
    \label{fig:heatmap}
\end{figure}
\begin{figure}[ht]
    \centering
    \includegraphics[width=\linewidth]{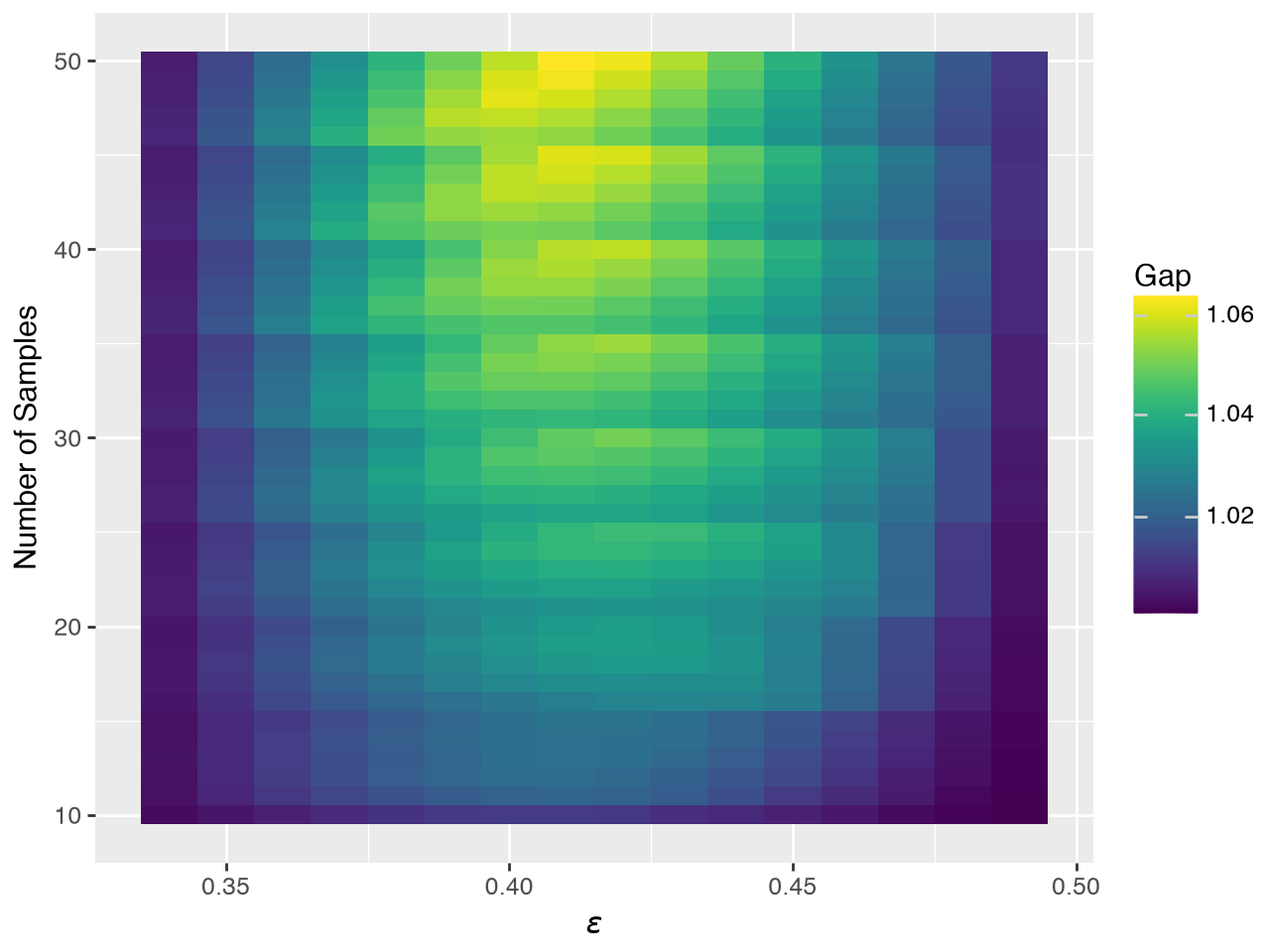}
    \caption[Thresholded Adaptivity Gaps in a Single Parameter Setting.]{\textbf{Thresholded Adaptivity Gaps in a Single Parameter Setting.} Plotting the thresholded adaptivity gap over values of $\varepsilon$ and expected number of samples $N$, such that $p_1 = p_2 = \varepsilon$ and $p_3 = 1 -2\varepsilon$. In this plot, we fix $k$ to $\lfloor 0.8N \rfloor$.}
    \label{fig:square_heatmap}
\end{figure}

\subsection{The Individual Shapley Requester and the Distributional Shapley Value}\label{app:ind_shap}
As we have defined it, the individual Shapley requester is related to the formulation of the Distributional Shapley Value introduced by \citet{ghorbani2020distributional}:
\begin{definition}[Distributional Shapley Value \citep{ghorbani2020distributional}]
    Given a data distribution $\D$ supported on $\X \times \Y$, a utility function $\fn: (\X \times \Y)^* \rightarrow \reals$, and some $m \in \mathbb{N}$, the distributional Shapley value for a point $(x,y) \in \X \times \Y$ is the expected data Shapley value over data sets of size $m$ containing $i$:
    \[\nu((x,y);\fn, \D, m) = \expect_{B \sim \D^{m-1}} \phi_{(x,y)}(B \cup \{(x,y)\}, \fn).\]
\end{definition}
Since the individual Shapley requester only uses a single sample from the distribution to determine its ordering, it is implicitly computing an unbiased (but high variance) estimator of the distributional Shapley value. We made this design decision for a number of reasons. For one, the ordering computed by the individual Shapley requester needs to be independent of the training dataset $D$. In our experiments we work with the CIFAR-10 dataset, which is of modest size (50,000 training examples). In order to sample datasets from the same distribution, the requester must train on a subset of CIFAR-10 and compute the ordering using subsets of the same size. If we train on a large subset of CIFAR-10, there will be significant overlap between the training sets and the subsets used for Shapley computation, increasing the dependence between the distributional Shapley computation and the true Shapley value with regards to $D$. If we train on a small subset of CIFAR-10, there will be less overlap but the performance of the trained classifiers will be poor. Training on half of the training set and computing the deletion ordering with respect to the held out set allows us to ensure that the requester is truly individual.

\subsection{Proof of Claim~\ref{claim:shap_subopt}}\label{app:shap_subopt}
\begin{proof}
    We will demonstrate a realized dataset in which the Shapley requester does not behave optimally. Let $\X = \{a,b\}$, and let $p_a = \frac{1}{3}$ and $p_b = \frac{2}{3}$. Suppose that a dataset $D^\mathsf{ref}$ is sampled such that it contains 1 item from subpopulation $a$ and 2 items from subpopulation $b$. Furthermore, let $k=1$.

    In this setting, the test accuracy of a model trained on $S$ using learning algorithm $\alg^*$ is 
    \begin{align*}
        u(S) &= \sum_{i \in \X} p_i \cdot \mathbbm{1}[i \in S] + \frac{p_i}{2} \cdot \mathbbm{1}[i \notin S].
    \end{align*}
    It is clear from this expression that $u(S \cup \{(i,f(i))\}) > u(S)$ if and only if $(i,f(i)) \notin S$. In fact, if $(i,f(i)) \notin S$, then $u(S \cup \{(i,f(i))\}) - u(S) = \frac{p_i}{2}$. We can therefore simplify the Shapley value in this scenario:
    \begin{align*}
        \phi_i &= \frac{1}{n}\cdot\sum_{S \subseteq D^\mathsf{ref}_{-i}}\frac{u(S \cup\{(x_i,y_i)\}) - u(S)}{\binom{n - 1}{|S|}}\\
        &= \frac{1}{n}\cdot\sum_{S\subseteq D^\mathsf{ref}_{-i}}\frac{p_{x_i}/2\cdot\mathbbm{1}[(x_i,y_i) \notin S]}{\binom{n-1}{|S|}}.
    \end{align*}

    Let $j_a$ denote the index in $D^\mathsf{ref}$ of the item from subpopulation $a$. None of the subsets of $D^\mathsf{ref}_{-j_a}$ contain subpopulation $a$. Therefore, the Shapley value for this item is 
     \begin{align*}
        \phi_{j_a} &= \frac{1}{n}\cdot\sum_{S\subseteq D^\mathsf{ref}_{-j_a}}\frac{p_a/2\cdot\mathbbm{1}[(x_{j_a},y_{j_a}) \notin S]}{\binom{n-1}{|S|}}\\
        &= \frac{1}{3} \cdot \sum_{S\subseteq D^\mathsf{ref}_{-j_a}} \frac{p_a/2}{\binom{2}{|S|}}\\
        &= \frac{p_a}{6}\left(\frac{1}{\binom{2}{|\{\}|}} + \frac{1}{\binom{2}{|\{b\}|}} + \frac{1}{\binom{2}{|\{b\}|}} + \frac{1}{\binom{2}{|\{b,b\}|}}\right)\\
        &= \frac{p_a}{6}\left(\frac{1}{1} + \frac{1}{2} + \frac{1}{2} + \frac{1}{1}\right)\\
        &= \frac{p_a}{2} = \frac{1}{6}.
    \end{align*}

    Let $j_b$ be the index in $D^\mathsf{ref}$ of one of the items representing subpopulation $b$. The subsets of $D^\mathsf{ref}_{-j_b}$ that don't contain subpopulation $b$ are $\{\}$ and $\{a\}$. Therefore, the Shapley value for an element of subpopulation $b$ is 
     \begin{align*}
        \phi_{j_b} &= \frac{1}{n}\cdot\sum_{S\subseteq D^\mathsf{ref}_{-j_b }}\frac{p_b/2\cdot\mathbbm{1}[(x_{j_b},y_{j_b}) \notin S]}{\binom{n-1}{|S|}}\\
        &= \frac{1}{3} \cdot \sum_{S \in \{\{\},\{a\}\}} \frac{p_b/2}{\binom{2}{|S|}}\\
        &= \frac{p_b}{6} \cdot \left(\frac{1}{\binom{2}{|\{\}|}} +\frac{1}{\binom{2}{|\{a\}|}}\right)\\
        &= \frac{p_b}{6} \cdot \left(\frac{1}{1} +\frac{1}{2}\right)\\
        &= \frac{p_b}{4} = \frac{1}{6}.
    \end{align*}

    Therefore, $\phi_{j_a} = \phi_{j_b}$. The Shapley requester, which breaks ties randomly, will request the deletion of a random sample from $D^\mathsf{ref}$ and expect to achieve a utility of
    \ifsubmission
    \begin{align*}
        \expect_\shap\util_{\shap,\err}(\theta^0,D^\mathsf{ref},1) &= \expect_\shap\err(\theta^1) - \err(\theta^0)\\
        &= \expect_\shap\err(\theta^1)\\
        &= \Pr[u^1 = a] \cdot \frac{p_a}{2}\\
        &= \frac{p_a}{6} = \frac{1}{18}.
    \end{align*}
    \else
    \begin{align*}
        \expect_\shap\util_{\shap,\err}(\theta^0,D^\mathsf{ref},1) &= \expect_\shap\err(\theta^1) - \err(\theta^0)\\
        &= \expect_\shap\err(\theta^1)\\
        &= \Pr[u^1 = a] \cdot \frac{p_a}{2} + \Pr[u^1 = b] \cdot 0\\
        &= \frac{p_a}{6} = \frac{1}{18}.
    \end{align*}
    \fi
    Now, let $\shap'$ be a requester that behaves identically to $\shap$ unless $D=D^\mathsf{ref}$, in which case it will request deletions in the optimal order (subpopulation $a$, then subpopulation $b$).
    On dataset $D^\mathsf{ref}$, $\shap'$ would deterministically achieve a utility of 
    \begin{align*}
        \util_{\shap',\err}(\theta^0,D^\mathsf{ref},1) &= \err(\theta_1) - \err(\theta^0)\\
        &= \frac{p_a}{2} = \frac{1}{6}.
    \end{align*}
    This implies that 
    \ifsubmission
    \begin{align*}
    \expect_{D \sim \D}&[\util_{\shap,\err}(\theta,D,k) \mid D \neq D^\mathsf{ref}]\\ 
    &= \expect_{D \sim \D}[\util_{\shap',\err}(\theta,D,k) \mid D \neq D^\mathsf{ref}]
    \end{align*}
    and 
    \begin{align*}
    \expect_{D \sim \D}&[\util_{\shap,\err}(\theta,D,k) \mid D = D^\mathsf{ref}]\\ 
    &< \expect_{D \sim \D}[\util_{\shap',\err}(\theta,D,k) \mid D = D^\mathsf{ref}],
    \end{align*}
    \else
    \[\expect_{D \sim \D}[\util_{\shap,\err}(\theta,D,k) \mid D \neq D^\mathsf{ref}] = \expect_{D \sim \D}[\util_{\shap',\err}(\theta,D,k) \mid D \neq D^\mathsf{ref}]\]
    and 
    \[\expect_{D \sim \D}[\util_{\shap,\err}(\theta,D,k) \mid D = D^\mathsf{ref}] < \expect_{D \sim \D}[\util_{\shap',\err}(\theta,D,k) \mid D = D^\mathsf{ref}],\]
    \fi
    Therefore, $\overline{\util}_{\shap,\err}(\D,k) < \overline{\util}_{\shap',\err}(\D,k)$,
    so $\shap$ is subpotimal.
\end{proof}

\end{document}